\documentclass{article}

\usepackage[final]{neurips_2025}

\usepackage[utf8]{inputenc}
\usepackage[T1]{fontenc}
\usepackage{hyperref}
\usepackage{url}
\usepackage{setspace}
\usepackage{pifont}
\usepackage{subfigure}
\usepackage[breakable]{tcolorbox}
\usepackage{graphicx}
\usepackage{wrapfig}
\usepackage{float}
\usepackage{natbib}
\usepackage{amsthm}
\usepackage{xfrac}
\usepackage{amsmath}
\usepackage{amsfonts}
\usepackage{caption}
\usepackage{tabularx}
\usepackage{multirow}
\usepackage{amssymb}
\usepackage[ruled,linesnumbered]{algorithm2e}
\usepackage{booktabs}
\usepackage{nicefrac}
\usepackage{microtype}
\usepackage{makecell}
\usepackage{xcolor}
\usepackage{framed}
\usepackage{ulem}
\usepackage{colortbl}

\makeatletter
\newcommand\figcaption{\def\@captype{figure}\caption}
\newcommand\tabcaption{\def\@captype{table}\caption}
\makeatother

\newcommand{\ab}{\bar{\alpha}}

\newcommand{\tabref}[1]{Table \ref{#1}}

\newcommand{\putfig}[3][pdf]{%
    \includegraphics[width=#2\textwidth]{figs/#3.#1}%
    \label{#3}%
}
\newcommand{\secref}[1]{Section \ref{#1}}
\newcommand{\figref}[1]{Figure \ref{#1}}


\newtheorem*{theorem*}{Theorem}

\definecolor{skyblue}{RGB}{101,162,217}
\definecolor{gold}{RGB}{226, 172, 0}
\definecolor{lblue}{HTML}{D3E6F7}

\title{Towards a Golden Classifier-Free Guidance Path via Foresight Fixed Point Iterations}

\usepackage[nameinlink,capitalize]{cleveref}

\newtheorem{thm}{Theorem}

\newtheorem{lem}{Lemma}

\newtheorem{ass}{Assumption}

\crefname{equation}{\!\!}{}

\crefname{algocf}{Algorithm}{Algorithms}

\usepackage{etoolbox}
\newtoggle{showproofs}
\toggletrue{showproofs} 

\newif\ifshow 
\newif\ifshowwt
\showwttrue

\newcommand{\wt}[2][f]{%
  \ifshowwt
    \ifnum\pdfstrcmp{#1}{i}=0%
      \textcolor{purple}{#2}
    \else%
      \footnote{ \textcolor{purple}{wt:#2}
      }
    \fi%
  \else%
  \fi%
}

\showtrue  
\newcommand{\jr}[2][f]{%
  \ifshow
    \ifnum\pdfstrcmp{#1}{i}=0%
      \textcolor{blue}{#2}
    \else%
      \footnote{
      \textcolor{blue}{#2}
      }
    \fi%
  \else%
  \fi%
}

\usepackage{ifthen}
\usepackage{environ}
\NewEnviron{proofenv}[1][t]{%
    \ifthenelse{\equal{#1}{t}}{%
        \par\noindent\textit{Proof:}\par
        \BODY 
        \hfill $\square$ 
        \par\noindent 
    }{%
    }%
}

\crefname{ass}{assumption}{Assumptions}

\newcommand{\ODEstinit}[3]{\ell_{#1, #2, #3}}

\newcommand{\xvar}[2]{x_{#1}^{#2}}
\newcommand{\xthat}[1][t]{\hat{x}_{#1}}

\newcommand{\Nt}{N}
\newcommand{\Tt}{T}

\newcommand{\Lt}{L}
\newcommand{\Ww}{W}
\newcommand{\numInteval}{M}

\newcommand{\abart}{\bar{\alpha}_t}


\newcommand{\epsc}[2]{\varepsilon^c\left(#1, #2\right)}
\newcommand{\epsu}[2]{\varepsilon^u\left(#1, #2\right)}
\newcommand{\epsuc}[2]{\varepsilon^{u/c}\left(#1, #2\right)}

\newcommand{\ftcgen}[3]{f_{#1 \rightarrow #2}^c\left(#3\right)}
\newcommand{\ftugen}[3]{f_{#1 \rightarrow #2}^u\left(#3\right)}
\newcommand{\ftucgen}[3]{f_{#1 \rightarrow #2}^{u/c}\left(#3\right)}


\newcommand{\Fop}{F}
\newcommand{\FopI}[2]{F_{#1 \rightarrow #2}}

\newcommand{\FopK}[2]{F^{(#1)}\left(#2\right)}


\newcommand{\Loss}{\mathcal{L} = \sum_{t=1}^{\Tt} \left\| \epsc{\xthat}{t} - \epsu{\xthat}{t} \right\|^2}

\author{
  Kaibo~Wang\textsuperscript{1}, ~Jianda~Mao\textsuperscript{1}, ~Tong~Wu\textsuperscript{1}, ~Yang~Xiang\textsuperscript{1,2}\thanks{Corresponding author}\\
  \textsuperscript{1}Department of Mathematics, The Hong Kong University of Science and Technology\\
  \textsuperscript{2} Shenzhen-Hong Kong Collaborative Innovation Research Institute, HKUST\\
  \texttt{\{kwangbi, jmaoao, twubi\}@connect.ust.hk}, ~~\texttt{maxiang@ust.hk}}

\begin{document}

\maketitle

\begin{abstract}

Classifier-Free Guidance (CFG) is an essential component of text-to-image diffusion models, and understanding and advancing its operational mechanisms remains a central focus of research. Existing approaches stem from divergent theoretical interpretations, thereby limiting the design space and obscuring key design choices. To address this, we propose a unified perspective that reframes conditional guidance as fixed point iterations, seeking to identify a \textit{golden path} where latents produce consistent outputs under both conditional and unconditional generation. We demonstrate that CFG and its variants constitute a special case of single-step short-interval iteration, which is theoretically proven to exhibit inefficiency. To this end, we introduce Foresight Guidance (FSG), which prioritizes solving longer-interval subproblems in early diffusion stages with increased iterations. Extensive experiments across diverse datasets and model architectures validate the superiority of FSG over state-of-the-art methods in both image quality and computational efficiency. Our work offers novel perspectives for conditional guidance and unlocks the potential of adaptive design.\footnote[1]{The codes are available at \url{https://github.com/Ka1b0/Foresight-Guidance}.}

\end{abstract}

\section{Introduction}
\label{sec:introduction}
Diffusion models~\cite{ddpm,ddim,karras2022elucidating} have emerged as a transformative paradigm for conditional generation, achieving remarkable success in synthesizing high-fidelity images from text prompts~\cite{sd2,podell2023sdxl}. Classifier-free guidance (CFG)~\cite{cfg} serves as a key component, steering the generation towards prompt alignment by amplifying the conditional outputs of a model. However, its over-amplified guidance introduces critical trade-offs, including compromised image quality and limited diversity~\cite{chung2024cfg++}, which highlight the necessity for deeper theoretical and practical insights into guidance mechanisms.

Existing efforts to mitigate the limitations of CFG are typically based on distinct \textit{conditional sampling} perspectives. For example, some methods conceptualize CFG as sampling from sharpened probability distributions~\cite{fu2024unveil,gao2025reg}, while others mitigate off-manifold deviations through posterior sampling~\cite{chung2024cfg++} or improve semantic alignment through reflective sampling~\cite{zsampling,resample}. Although these approaches are theoretically sound, the absence of a unified interpretation narrows the design space of guidance mechanisms, rendering each method a tightly coupled framework where components cannot be independently modified. To integrate the effective components of existing approaches and further unlock the potential of guidance mechanisms, a critical question arises:

\begin{center}
\textit{How can we systematically explore the design space for conditional guidance and \\ develop more effective algorithms from a unified perspective?}
\end{center}

We address this question by proposing a unified framework based on fixed point iterations. Observing that the latent variable $x_t$ achieves better generation quality and alignment when its unconditional generation matches the prompt-conditioned generation~\cite{zsampling,zhou2024golden} (\figref{fig:framework}~(a)), we reframe conditional generation as identifying a \textit{golden path} composed of such points. Our framework comprises two decoupled steps: calibration and denoising. At each timestep $t$, the \textit{calibration step} iteratively refines $x_t$ to $\hat{x}_t$ via fixed point iterations. Following this, the \textit{denoising step} samples from $p(x_{t-1} \mid \hat{x}_t)$ with unconditional noise prediction, as depicted in \figref{fig:framework} (b). We demonstrate that CFG~\cite{cfg} and its variants~\cite{chung2024cfg++,zsampling,resample} are special cases of this framework. By disentangling guidance from sampling, our approach enables (1) systematic comparison of design choices across methods and (2) upgrades existing algorithms through fixed point iteration design.

The number of iterations and the consistency intervals in fixed point algorithms are critical yet underexplored design dimensions. Current methods typically solve multiple short-interval subproblems using a single iteration each, a suboptimal strategy proven by our theoretical analysis. We first demonstrate that existing methods can be upgraded simply by increasing the iteration count (e.g., CFG~$\times K$). Beyond that, we propose \textit{Foresight Guidance} (FSG), which prioritizes solving longer-interval subproblems during the earlier stages of the diffusion process with more iterations (\figref{fig:framework} (c)). FSG enhances alignment by propagating guidance signals over extended time horizons while improving efficiency by reducing the number of subproblems.

Extensive experiments across datasets and models demonstrate that existing guidance methods directly benefit from increased iterations, and FSG further improves generation quality with lower computational overhead. We provide novel insights into advancing the development of efficient and adaptable guidance mechanisms. Our contributions are threefold:

\begin{itemize}

\item We model conditional guidance as a calibration task toward a golden path, unifying CFG and related methods under a fixed point iteration framework.

\item We upgrade existing methods and propose FSG by designing consistency intervals and iteration schedules, achieving better alignment and efficiency.

\item We validate the effectiveness of FSG through comprehensive experiments and demonstrate the potential of the framework in guidance design.
\end{itemize}

\section{Preliminaries}

\subsection{Diffusion Models}
Diffusion models~\cite{ddpm,ddim} are generative models that synthesize images by progressively reversing a predefined noising process. The forward diffusion process ($t = 0 \to T$) gradually corrupts an initial clean image $x_0$ through a noising function $n_{t\to t+1}$ determined by a noise schedule $\alpha_{1:T}$:
\begin{equation}
    x_{t+1} = n_{t\to t+1}(x_t) = \sqrt{\alpha_{t+1}}x_t + \sqrt{1-\alpha_{t+1}}\epsilon, \quad \epsilon\sim\mathcal{N}(0,\mathbf{I}).
\end{equation}
The reverse process reconstructs the data by iteratively denoising $x_t$ to $x_{t-1}\sim p(x_{t-1} \mid x_t)$ through $f_{t\to t-1}$, which consists of a neural network-based noise predictor $\epsilon(x_t)$ and a reverse sampler:
\begin{equation}
    x_{t-1} = f_{t\to t-1}(x_t) = \text{Sampler}(x_t, \epsilon(x_t)),
\end{equation}
where the sampler design is flexible, provided that $x_t$ follows the marginal distribution $p_t(x_t)$ of the forward process. Notably, DDIM~\cite{ddim} operates as a deterministic sampler, corresponding to a discrete ODE formulation~\cite{song2020score}. Let $\ab_t = \prod_{s=1}^t \alpha_s$, its update rule is defined as:
\begin{equation}
    \begin{aligned}
        \phi(x_t) &= \frac{x_t - \sqrt{1-\ab_t} \epsilon(x_t)}{\sqrt{\ab_t}}, \\
        x_{t-1} &= \sqrt{\ab_{t-1}} \phi(x_t) + \sqrt{1-\ab_{t-1}} \epsilon(x_t).
    \end{aligned}
\end{equation}
To generate an image, we first sample $x_T \sim \mathcal{N}(0,\mathbf{I})$ and then compute $x_0 = f_{T\to0}(x_T) = f_{1\to0} \circ \cdots \circ f_{T\to T-1}(x_T)$.

\subsection{Classifier-Free Guidance}
In conditional generation tasks (e.g., generating images for a given prompt $c$), Classifier-Free Guidance (CFG)~\cite{cfg} is widely used to improve alignment by extrapolating from the unconditional noise prediction $\epsilon^u$ towards the conditional prediction $\epsilon^c$, using a guidance scale parameter $w>1$. The adjusted noise prediction $\epsilon^w$ is computed as:
\begin{equation}
\epsilon^w(x_t) = \epsilon^u(x_t) + w\left(\epsilon^c(x_t) - \epsilon^u(x_t)\right).
\end{equation}

For brevity, we denote the denoising processes using $\epsilon^u$, $\epsilon^c$, and $\epsilon^w$ as $f^u$, $f^c$, and $f^w$, respectively. CFG++~\cite{chung2024cfg++} enhances CFG for manifold preservation through a tunable parameter $\lambda \in [0,1]$:
\begin{equation}
x_{t-1} = \sqrt{\ab_{t-1}} \phi^\lambda(x_t) + \sqrt{1-\ab_{t-1}} \epsilon^u(x_t), \quad\phi^\lambda(x_t)=\frac{1}{\sqrt{\ab_t}}(x_t - \sqrt{1-\ab_t} \epsilon^\lambda(x_t)).
\end{equation}

Z-sampling~\cite{zsampling} and Resampling~\cite{resample} first
adjust $x_t$ to $\hat{x}_t$ through reflection, and then apply an update
using $\epsilon^w(\hat{x}_t)$. The reflection in
Z-sampling uses $\hat{x}_t = f^w_{t+1\to t} \circ f^u_{t\to t+1}(x_t)$, while Resampling employs
$\hat{x}_t = f^w_{t+1\to t} \circ n_{t\to t+1}(x_t)$. Here, $f_{t\to t+1}= \sqrt{\ab_{t+1}} \phi(x_t) + \sqrt{1-\ab_{t+1}} \epsilon(x_t)$ denotes DDIM inversion~\cite{ddim}.

\section{Methodology}
\label{sec:methodology}
We formalize conditional guidance as calibration toward golden paths, then propose a unified fixed point iteration framework encompassing existing methods (\secref{sec:golden_path}), thus enabling systematic comparison and extension of different design choices.
Next, we introduce foresight guidance (FSG), a strategy to address subproblems across longer intervals through multi-step iterations, improving
efficacy and efficiency (\secref{sec:foresight_guidance}).

\begin{figure}[tt]
\centering
\includegraphics[width=\textwidth]{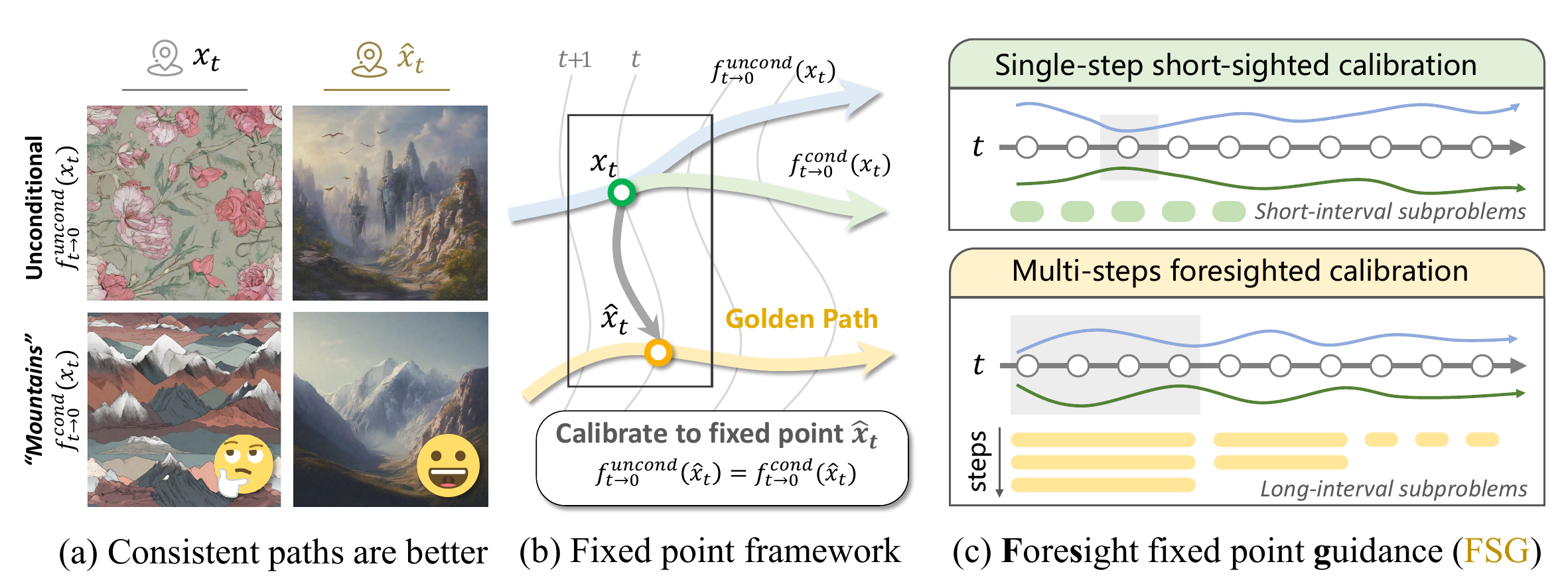}
\caption{From left to right: (a) Illustration of the golden path. Latents $\hat{x}_t$ generating mountains unconditionally produce higher-quality images when guided by mountain-related prompts. (b) Unified framework for fixed point iterations. The state $x_t$ is calibrated toward $\hat{x}_t$ on the golden path via fixed point iterations. (c) Proposed foresight guidance (FSG). We enhance efficiency and alignment by conducting fixed point iterations over longer intervals with increased iterations.}
\label{fig:framework}
\end{figure}

\subsection{Towards the Golden Path via Fixed Point Iterations}
\label{sec:golden_path}
\textbf{Consistent denoising paths under conditional and unconditional guidance are golden paths.} In diffusion models that generate images $x_0 \sim p(x \mid c)$ from initial noise $x_T \sim \mathcal{N}(0,\mathbf{I})$, empirical observations suggest that specific denoising paths yield higher-quality images and better alignment under target conditions $c$~\cite{zhou2024golden,zsampling}. We term these paths as \textit{golden paths}. Our objective is to calibrate the denoising process towards these golden paths.

Let $f^c_{t\to 0}(x_t)$ and $f^u_{t\to 0}(x_t)$ denote the conditional and unconditional denoising processes starting from $x_t$, then the latents $\hat{x}_t$ on golden path satisfy: $f^u_{t\to 0}(\hat{x}_t) = f^c_{t\to 0}(\hat{x}_t)$. For instance, if $\hat{x}_t$ unconditionally generates mountain images, it achieves superior conditional generation performance with mountain-related prompts compared to other latents $x_t$, as visualized in \figref{fig:framework}(a). The underlying intuition is: when $f^c_{t\to 0}(x_t)$ and $f^u_{t\to 0}(x_t)$ differ significantly, the model
requires a \textit{sharp turn}, often sacrificing texture detail and aesthetic quality to achieve conditional alignment.
Conversely, when they align, the model balances both conditional alignment and generation quality. Further discussion is provided in Appendix~\ref{app:golden_path_discussion}.

We therefore aim to calibrate $x_t$ toward $\hat{x}_t$ by solving
\begin{equation}
\label{eq:golden_path}
f^u_{t\to 0}(x_t) = f^c_{t\to 0}(x_t), \quad x_t \in\mathcal{M}_t,
\end{equation}
where $\mathcal{M}_t$ denotes the probability manifold at timestep $t$.\footnote[2]{This constraint aims to prevent inaccurate score estimation off the manifold. Since satisfying this constraint strictly is not the primary focus of this paper, we simply follow empirical settings, e.g., use smaller step sizes.}

\textbf{A unified framework based on fixed point iteration.} We propose a framework based on fixed point iteration to approximately solve \eqref{eq:golden_path}, and demonstrate that classifier-free guidance~\cite{cfg} (CFG) and state-of-the-art variants~\cite{chung2024cfg++,resample,zsampling} are instances of this unified framework. Specifically, we divide the transition $x_t \to x_{t-1}$ into three phases: (1) Construct a fixed point operator $F(x_t)$ satisfying $F(x_t) \in \mathcal{M}_t$ and $\hat{x}_t=F(\hat{x}_t)\Rightarrow
f^u_{t\to 0}(\hat{x}_t) = f^c_{t\to 0}(\hat{x}_t)$. (2) Perform $K$ fixed point iterations from $x_t^{(0)}=x_t$: $x_t^{(k)} = F(x_t^{(k-1)}),  k=1,\cdots, K$ and denote $\hat{x}_t=x^{(K)}_t$. (3) Derive $x_{t-1}$ via the \textbf{unconditional} noise prediction $\epsilon^u(\hat{x}_t)$ and a reverse sampler. This workflow can be expressed as:

\begin{tcolorbox}[colback=black!5, colframe=black!5, boxrule=0pt, arc=0pt, left=3pt, right=3pt, top=5pt, bottom=5pt, boxsep=0pt]

\begin{equation}
\begin{aligned}
\hat{x}_t & = F^{(K)}(x_t) = F\circ\cdots\circ F(x_t), \qquad & \rhd \ \text{Calibrate} \\
x_{t-1} & = \text{Sampler}\left(\hat{x}_t, \epsilon^u(\hat{x}_t)\right). \qquad & \rhd \ \text{Denoise}
\end{aligned}
\end{equation}

\end{tcolorbox}

\vspace{3pt}

Our framework decouples calibration and denoising, enabling the injection of conditional guidance via the calibration step without binding it to the denoising process. Thus, the key questions of interest for conditional guidance become: (1) What are the design dimensions of fixed point iteration? (2) How can we design along these dimensions for effective guidance? We address the first question below and investigate the second in \secref{sec:experimental_results}.

\textbf{Design space.} Our fixed point iteration framework identifies four key dimensions: (1) \textit{Consistency interval.} Since the clean semantics $f^{u/c}_{t\to 0}(x_t)$ are inaccessible during denoising, we consider consistency over intervals $a\to b$ as an alternative, i.e., $f^{u}_{a \to b}(x_t)=f^{c}_{a \to b}(x_t)$. Longer intervals help to aggregate semantics from more distant timesteps but increase computational cost. (2) \textit{Fixed point operator.} The fixed point operators typically used include the linear operator $x_t=x_t+w(f_{a \to b}^c(x_t)-f_{a \to b}^u(x_t))$ and the backward-forward operator $x_t = f_{a \to b}^{u_{(-1)}}\circ f_{a \to b}^c(x_t)$.  (3) \textit{Guidance strength/scheduler.} We can schedule the magnitude of updates for different timesteps. Similar to the learning rate, inappropriate settings can lead to slow convergence or deviation from the data manifold. (4) \textit{Number of iterations $K$.} Ideally, the fixed point algorithm
converges linearly w.r.t. $K$, i.e., $\lVert\hat{x}_t-x_t\rVert=\mathcal{O}(\rho^K)$, where $\rho\in(0,1)$ is the spectral radius of $F'$.

Surprisingly, we find that CFG and its variants can be incorporated into our fixed point framework despite originating from different derivations, which provides us with a unified perspective. We present their design choices in \tabref{tab:unified_framework}.

\textit{Sketch of the unification. (detailed in Appendix~\ref{app:unified_framework})} We isolate the unconditional noise in the update of DDIM~\cite{ddim} and unify CFG~\cite{cfg} and CFG++~\cite{chung2024cfg++} as linear fixed point operators in the interval $[t-1,t]$, using the equivalence of $\epsilon^c(x_t)=\epsilon^u(x_t)$ to $f^{c}_{t\to t-1}(x_t)=f^{u}_{t\to t-1}(x_t)$. Approximating $f_{t+1\to t}^{u_{(-1)}}$ by $f_{t\to t+1}^{u}$ or $n_{t\to t+1}$, we equate reflective sampling in Z-sampling~\cite{zsampling} and Resampling~\cite{resample} to backward-forward operators. Further nested CFG updates yield the results shown in \tabref{tab:unified_framework}.

\textbf{Comparison of design choices.} (1) \textit{Iteration strength scheduler for CFG ($w\xi_t$) and CFG++ ($\lambda\tilde{\xi}_t$):} Compared to CFG, CFG++ provides more stable iteration strength during the critical early generation stage, avoiding drastic decay. This explains the improved stability of CFG++ over CFG. (2) \textit{Fixed point operators:} Compared to the linear operators $x_t+w\xi_t\Delta \epsilon(x_t)$ used in CFG and CFG++, we observe that the forward-backward operator $f_{a \to b}^{u_{(-1)}}\circ f_{a \to b}^c(x_t)$ with larger consistency intervals exhibits lower empirical contraction rates. This facilitates faster convergence, albeit at the cost of increased model evaluations. A more detailed discussion is provided in Appendix~\ref{app:operator_comparison}.

\textbf{Extension of design choices.} The unified fixed point framework allows us to directly upgrade existing algorithms by increasing iteration counts. We extend CFG and CFG++ to CFG / CFG++~$\times K$, where $K$ denotes the number of iterations (Algorithm~\ref{alg:cfg},\ref{alg:cfgpp}). However, this approach leads to a substantial increase in model evaluations. Therefore, we propose \textit{Foresight Guidance}, which balances efficiency and effectiveness by prioritizing longer intervals and more iterations in the early stages.

\begin{table}[tt]
\caption{Unification of different guidance methods as fixed point iterations via $x^{k+1}=F(x^k)$, aiming for $f^u_{a\to b}(x_t) = f^c_{a\to b}(x_t)$.
Multi-iter. indicates whether the original algorithm supports multi-step iterations.
Abbreviations:
(1) $w>1,\xi_t=\sqrt{1 - \ab_t}-\sqrt{\alpha_t - \ab_t}$ for CFG;
(2) $\lambda\in[0,1],\tilde{\xi}_t=\sqrt{1 - \ab_t}$ for CFG++; (3) $f^\gamma$ denotes denoising with noise $\epsilon^u+\gamma(\epsilon^c-\epsilon^u)$, $\Delta\epsilon(\cdot)$ denotes $\epsilon^c(\cdot)-\epsilon^u(\cdot)$, and $\text{id}$ denotes the identity mapping.
}
\vspace{5pt}
\label{tab:unified_framework}
\centering
\renewcommand{\arraystretch}{1.2}

\begin{tabular}{l|ccc}

\toprule
Methods & Fixed point operator $F(x_t)$ & Interval ($a \to b$) & Multi-iter. \\

\midrule
CFG~\cite{cfg} & $x_t - w\xi_t\Delta \epsilon(x_t)$ & $t\to t-1$ & \ding{55} \\
CFG++~\cite{chung2024cfg++} & $x_t - \lambda\tilde{\xi}_t\Delta \epsilon(x_t)$ & $t \to t-1$ & \ding{55} \\
Z-sampling~\cite{zsampling} & $(\text{id}-w\xi_t\Delta\epsilon)\circ f^\gamma_{t+1\to t}\circ f^u_{t\to t+1}(x_t)$ & $t+1 \to t-1$ & \ding{55} \\
Resampling~\cite{resample} & $(\text{id}-w\xi_t\Delta\epsilon)\circ f^\gamma_{t+1\to t}\circ n_{t\to t+1}(x_t)$ & $t+1 \to t-1$ & \ding{51} \\
FSG (ours) & $(\text{id}-\lambda\tilde{\xi}_t\Delta\epsilon)\circ f^u_{t-\Delta t\to t}\circ f^\gamma_{t\to t-\Delta t}(x_t)$ & $t \to t-\Delta t$ & \ding{51} \\

\bottomrule

\end{tabular}

\end{table}

\subsection{Foresight Guidance}
\label{sec:foresight_guidance}

\begin{wrapfigure}[19]{r}{0.52\textwidth}
\vspace{-17pt}
    \begin{minipage}{0.49\textwidth}
    \hspace{9.5pt}
    \begin{algorithm}[H]
    \label{alg:fsg}
    \small
    \caption{Foresight Guidance (FSG) }
       \SetAlgoLined
       \SetKwInOut{Input}{Input}
       \SetKwInOut{Output}{Output}
       \Input{Initial noise $x_T$, Condition $c$, Timesteps $T$, Iteration set $\mathcal{S} = \{(t_i, K_i, \Delta t_i)\}_{i=1}^M$, Strengths $\gamma>1, \lambda\in[0,1]$.}
       \Output{Generated image $x_0$}
        \For{$t \leftarrow T$ \KwTo $1$}{
            \If{$(t, K, \Delta t) \in \mathcal{S}$}{
                \textit{\textcolor{skyblue}{Foresight Fixed Point Calibration}}\;
                \For{$k \leftarrow 1$ \KwTo $K$}{
                    $x_{t-\Delta t} = f^\gamma_{t\to t-\Delta t}({x}_t)$\;
                    $x_t = f^u_{t-\Delta t\to t}(x_{t-\Delta t})$\;
                }
            }
             \textit{\textcolor{skyblue}{CFG++ Calibration}}\;
            $\hat{x}_t = x_t - \lambda\tilde{\xi}_t(\epsilon^c(x_t) - \epsilon^u(x_t))$\;
            \textit{\textcolor{skyblue}{Denoising Step}}\;
            $x_{t-1} = \text{Sampler}(\hat{x}_t, \epsilon^u({x}_t))$\;
        }
        \Return $x_0$
    \end{algorithm}

    \end{minipage}

    \vspace{-10pt}
    \label{fig:fsg_algorithm}
\end{wrapfigure}

 We aim to minimize the gap between conditional and unconditional paths. Existing methods typically divide the problem into $T$ subproblems with intervals of $t/t+1\to t-1$, each solved with one step of fixed point iteration (illustrated in \figref{fig:framework}~(b)).
While each subproblem is relatively simple, the large number of subproblems limits efficiency, especially for high-overhead fixed point operators such as backward-forward operators. Moreover, small intervals limit the benefit of calibration, as only the semantics of neighboring timesteps can be obtained for guidance.

To improve efficiency, instead of allocating one iteration for each small interval, we can allocate multiple iterations for fewer long intervals. We theoretically demonstrate that the single-step short-interval strategy is typically suboptimal. Given the total iteration budget $N$ and timesteps $T$, we uniformly divide $f_{T\to 0}^u=f_{T\to 0}^c$ into $M$ subproblems ($f_{iT/M\to (i-1)T/M}^u=f_{iT/M\to (i-1)T/M}^c$, $i\in[M]$), solving each only at timesteps $iT/M$ with $N/M$ fixed point iterations (assuming $N/M,T/M\in\mathbb{Z}$). The $M=T$ case represents the short-interval single-step strategy.

\begin{thm} \label{thm:main}
    (Detailed in Appendix~\ref{app:proofs}) Let $\mathcal{L}=\frac{1}{T}\sum_{t=1}^T\lVert\epsilon^c(\hat{x}_t)-\epsilon^u(\hat{x}_t)\rVert_2^2$ denote the average gap over calibrated trajectories $\hat{x}_t\in\mathbb{R}^d$, with $B$ as the Euclidean norm bound for $\hat{x}_t$ and $\epsilon^{c/u}(\hat{x}_t)$, $L$ as the smoothness constant of $\epsilon^{c/u}(\cdot)$, and $r\in(0,1)$ as the upper bound of the contraction rate of $\Fop_i$, $i \in [M]$. Under mild assumptions (Appendix~\ref{app:proofs}), there exists a constant $C > 0$ such that
    \begin{equation}
        \mathcal{L} =\frac{1}{T}\sum_{t=1}^T\lVert\epsilon^c(\hat{x}_t)-\epsilon^u(\hat{x}_t)\rVert_2^2\le B^2 \left( Cr^{\frac{2N}{M}} + \frac{2L^2}{M^2} \right).
    \end{equation}

    \end{thm}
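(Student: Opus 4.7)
The plan is to split the average loss $\mathcal{L}$ into contributions from (i) the $M$ anchor timesteps $t_i = iT/M$, where $N/M$ fixed point iterations are actually performed, and (ii) the remaining intermediate timesteps, at which only unconditional denoising is propagated from the nearest calibrated anchor. The first source of error decays geometrically in $N/M$ and will produce the $r^{2N/M}$ term; the second source is controlled by the interval length $T/M$ together with the smoothness of the noise predictors, producing the $L^2/M^2$ term.

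First, at each anchor $t_i$, let $\hat{x}_{t_i}^{\star}$ be the fixed point of the operator $F_i$ enforcing $f^c_{t_i \to t_{i-1}}(x) = f^u_{t_i \to t_{i-1}}(x)$. Since $F_i$ has contraction rate at most $r$ and all iterates lie in a ball of radius $B$, starting from $x_{t_i}$ and iterating $N/M$ times yields $\|\hat{x}_{t_i} - \hat{x}_{t_i}^{\star}\| \le 2B r^{N/M}$. Combining the inequality $\|a\|^2 \le 2\|a-b\|^2 + 2\|b\|^2$ with the $L$-Lipschitz property of $\epsilon^c - \epsilon^u$ then gives $\|\epsilon^c(\hat{x}_{t_i}) - \epsilon^u(\hat{x}_{t_i})\|^2 \le 2\|\epsilon^c(\hat{x}_{t_i}^{\star}) - \epsilon^u(\hat{x}_{t_i}^{\star})\|^2 + 8 L^2 B^2 r^{2N/M}$, which extracts the geometric contraction contribution.

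Second, I would bound $\|\epsilon^c(\hat{x}_t) - \epsilon^u(\hat{x}_t)\|^2$ at an arbitrary timestep $t \in (t_{i-1}, t_i]$ by unrolling the DDIM updates between consecutive anchors. The endpoint consistency $f^c_{t_i \to t_{i-1}}(\hat{x}_{t_i}^{\star}) = f^u_{t_i \to t_{i-1}}(\hat{x}_{t_i}^{\star})$ constrains the two trajectories only at $t_{i-1}$, so at interior timesteps the conditional and unconditional iterates are compared step-by-step using $L$-smoothness of $\epsilon^{c/u}$ and the norm bound $B$. Telescoping the per-step perturbations across the $T/M$ steps inside the interval yields a per-timestep bound of order $L^2 B^2 / M^2$. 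Averaging both contributions over all $T$ timesteps, dividing by $T$, and absorbing constants into $C$ then assembles the stated bound $\mathcal{L} \le B^2 (C r^{2N/M} + 2L^2/M^2)$.

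The main obstacle I anticipate is the second step: converting the compositional endpoint equality at the fixed point into a uniform pointwise bound at every intermediate timestep within the interval. This requires carefully tracking how a single endpoint constraint propagates backwards through the DDIM recursion, and it is here that $L$ and the interval length enter quadratically to produce the $L^2/M^2$ scaling; depending on how the normalization of time and the Lipschitz constant interact, absorbing the extra $T$ factor into $L$ or $C$ is the delicate bookkeeping. The first step is essentially a textbook Banach-iteration estimate once the contraction rate $r$ and radius $B$ are in hand.
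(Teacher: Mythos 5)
Your top-level decomposition into anchor timesteps (solved by $K = N/M$ fixed-point iterations) and interior timesteps (handled by smoothness over a window of length $T/M$) matches the paper's strategy, and your Banach-iteration estimate $\|\hat{x}_{t_i} - \hat{x}_{t_i}^{\star}\| \le 2B r^{N/M}$ is the standard ingredient. However, there are two places where the argument as sketched does not close.

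\textbf{The anchor bound leaves an unhandled term.} Your inequality
$\|\epsilon^c(\hat{x}_{t_i}) - \epsilon^u(\hat{x}_{t_i})\|^2 \le 2\|\epsilon^c(\hat{x}_{t_i}^{\star}) - \epsilon^u(\hat{x}_{t_i}^{\star})\|^2 + \mathcal{O}(L^2 B^2 r^{2N/M})$
is correct, but you never argue that $\|\epsilon^c(\hat{x}_{t_i}^{\star}) - \epsilon^u(\hat{x}_{t_i}^{\star})\|$ vanishes or is small. Being a fixed point of $F_i$ gives only the \emph{integrated} endpoint identity $f^c_{t_i\to t_{i-1}}(\hat{x}^{\star}) = f^u_{t_i\to t_{i-1}}(\hat{x}^{\star})$, which does not automatically imply pointwise agreement $\epsilon^c(\hat{x}^{\star}) = \epsilon^u(\hat{x}^{\star})$ of the two score fields at the starting point: the ODE solutions could agree at the endpoint $t_{i-1}$ while having different initial velocities. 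Closing this gap requires a \emph{noise-control (inverse) inequality} lower-bounding the ODE-endpoint discrepancy by the initial noise gap, i.e.\ something of the form $\|f^c_{a\to b}(x) - f^u_{a\to b}(x)\| \ge c|\lambda_a|\,|b-a|\,\|\epsilon^c(x) - \epsilon^u(x)\|$. The paper makes this an explicit assumption (\cref{ass:smooth_ode}, noise-control part) and uses it in the opposite direction: rather than passing through $\hat{x}^{\star}$, it first applies a contraction lemma to bound $\|f^c_{W\to 0}(F^{(K)}(x_W)) - f^u_{W\to 0}(F^{(K)}(x_W))\|$ by $\mathcal{O}(\ell_{W,0,L}\, r^K B)$ and then inverts this endpoint gap through the noise-control inequality to control the scalar $\|\epsilon^c(\hat{x}_W) - \epsilon^u(\hat{x}_W)\|$. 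Either route is fine, but the inverse estimate is essential and must be stated; without it your anchor argument stalls at an unbounded residual.

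\textbf{The interior bound takes a genuinely harder path.} You propose to unroll the DDIM recursion step-by-step inside each interval and telescope the per-step perturbations, acknowledging that the bookkeeping is delicate. That concern is well founded: a naive step-by-step propagation of the spatial Lipschitz constant $L$ picks up a multiplicative $(1+\Theta(L\Delta t))$ factor each step, which can compound to $e^{\Theta(L W/T)}$ over the interval rather than giving the clean additive $L^2 B^2/M^2$ per-timestep bound. The paper avoids this entirely by positing a \emph{time-direction Lipschitz} property of the noise evaluated along the ODE trajectory (\cref{ass:smoothness_of_noise}, part 2): $\|\epsilon^{c/u}(x^{c/u}_{t_1}, t_1) - \epsilon^{c/u}(x^{c/u}_{t_2}, t_2)\| \le L B |t_1 - t_2|/T$, which directly yields
$\|\epsilon^{c/u}(\hat{x}_t) - \epsilon^{c/u}(\hat{x}_W)\| \le LB(W-t)/T$
for interior $t$, so that the three-term triangle inequality (with the $\|a+b+c\|^2 \le 3(\|a\|^2+\|b\|^2+\|c\|^2)$ split) immediately gives the $\mathcal{O}(L^2/M^2)$ contribution after summing the squares of $1,2,\ldots,W-1$. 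If you want your unrolling approach to go through without exponential blow-up, you would need to either prove a statement of this form from the ODE dynamics (which effectively re-derives a Grönwall-type bound and restricts the regime to small $L W/T$) or impose it directly as the paper does. In short, the skeleton of your proof is right, but both of the ``delicate'' steps you flagged are exactly where additional structural assumptions about the score fields are needed, and the paper's proof is organized precisely to isolate them.
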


Setting the derivative of the right-hand side to zero yields the optimal $M^*$ that minimizes the upper bound. The optimal $M^*$ is typically not $T$, indicating that performing fixed point iterations at every timestep is unnecessary.
Key insights include: (1) Smaller $L$ (smoother noise predictors) reduces $M^*$, favoring fewer, longer-interval subproblems; (2) Sufficient computational resources ($N\to\infty$) drive $M\to T$, recovering the
short-interval
subproblems. In practice, limited resources suggest using moderate interval sizes to enhance fixed point solving efficiency. Intuitively,
longer intervals provide stronger guidance during early denoising stages, where short-term approximations ($f_{t\to t-1}^{u/c}$)
yield limited benefits
due to insufficient estimates of the clean image. Thus, generating prototypes through extended conditional denoising processes $f^c_{t \to t-\Delta t}$ and preserving information into $\hat{x}_t$ via the inverse ODE $f^u_{t-\Delta t \to t}$ appears more effective.

\textbf{Practical design of foresight guidance.} We integrate these design choices into Foresight Guidance (FSG), as outlined in Algorithm~\ref{alg:fsg}. We perform multi-step fixed-point iterations with long time intervals at specific timesteps, parameterized by a set $\mathcal{S} = {(t_i, K_i, \Delta t_i)}_{i=1}^{M}$, where each tuple denotes the starting timestep, number of iterations, and interval length, respectively. To reduce computational overhead, we employ a forward-backward operator with a single-step ODE solver (DDIM) for both $f^\gamma_{t \to t-\Delta t}$ and $f^u_{t-\Delta t \to t}$. As most image semantics are determined in the early stages of diffusion~\cite{xu2023restart}, we allocate more iterations ($K_i$) and longer intervals ($\Delta t_i$) during these phases, following a ratio of approximately 3:2:1 across early, middle, and late stages. Outside these scheduled iterations, we apply CFG++ to maintain stable guidance strength and avoid oscillations. This allows early generation stages to benefit from the semantics of future states, motivating the term \textit{foresight guidance}.

\begin{table}[t]
\caption{The quantitative results on the SDXL model with NFE = 50, 100, 150 (Time denotes seconds per image, $\uparrow$ denotes higher is better, The best results under the same NFE are bolded).}
\label{tab:main_results}
\centering
\setlength{\tabcolsep}{2.5pt}
\renewcommand{\arraystretch}{1.2}

  \begin{tabular}{lcc| cccc| cccc}

    \toprule
    \multicolumn{3}{c|}{Datasets} & \multicolumn{4}{c|}{DrawBench~\cite{draw}} & \multicolumn{4}{c}{Pick-a-Pic~\cite{kirstain2023pick}} \\
    \multicolumn{1}{c}{Method} & {NFE} & {Time} &
    {IR$\uparrow$} & {HPSv2$\uparrow$}  & {AES$\uparrow$} & CLIP$\uparrow$ & {IR$\uparrow$} & {HPSv2$\uparrow$}  & {AES$\uparrow$} & CLIP$\uparrow$\\

    \midrule
    CFG & 50 & 6.71 & 59.02 & 28.73 & 6.07 & 32.29 & 82.14 & 28.46 & \textbf{6.73} & 33.53 \\
    CFG++ & 50 & 6.82 & 65.21 & 28.98 & \textbf{6.08} & 32.60 & 89.75 & 28.72 & 6.67 & 33.86\\
    Z-Sampling & 50 & 6.66 & 72.75 & 29.08 & 6.00 & 32.59 & 96.77 & 28.68 & 6.59 & 33.97 \\
    Resampling & 50 & 6.48 & 59.99 & 28.80 & 5.99 & 32.21 & 82.65 & 28.46 & 6.61 & 33.46\\
    \rowcolor{skyblue!20}
    FSG (ours) & 50 & 6.77 & \textbf{82.81} & \textbf{29.42} & 6.01 & \textbf{32.65} & \textbf{98.59} & \textbf{28.89} & 6.60 & \textbf{34.32}\\

    \midrule
    CFG$\times 2$ & 100 & 12.51 & 77.71 & 29.36 & \textbf{6.06} & 32.44 & 96.06 & 28.84 & 6.64 & 34.13\\
    CFG++$\times 2$ & 100 & 12.60 &  79.42 & 29.42 & 6.01 & 32.61 & 99.90	& 29.00	& 6.61 & 34.18\\
    Z-Sampling & 100 & 12.61 & 77.46& 29.26 & 6.03 & 32.39 & 94.98 & 28.79 & 6.61 & 34.01\\
    Resampling & 100 & 12.49 & 77.26 & 29.12 & 6.00 & 32.46  & 79.36 & 28.61 & 6.02 & 33.61\\
    \rowcolor{skyblue!20}
    FSG (ours) & 100 & 12.56 & \textbf{84.12} & \textbf{29.54} & 6.02 & \textbf{32.76}  & \textbf{102.82} & \textbf{29.05} & \textbf{6.66} & \textbf{34.30}\\

    \midrule
    CFG$\times 3$ & 150 & 18.47 & 83.56 & \textbf{29.51} & 5.95 & 32.66 & 102.13 & 29.04 & 6.61 & \textbf{34.28} \\
    CFG++$\times 3$ & 150 & 18.47 & 82.58 & 29.45	& 5.93 & 32.66 & 103.32	& \textbf{29.05} & 6.57 & 34.20 \\
    Z-Sampling & 150 & 18.26 & 78.35 & 29.40 & \textbf{6.06}	& 32.43 & 97.25 & 28.90 & \textbf{6.67}	& 34.20\\
    Resampling & 150 & 18.26 & 79.98 & 29.23 & 6.05 & 32.32  & 87.48 & 28.70 & 6.59 & 33.49 \\
    \rowcolor{skyblue!20}
    FSG (ours) & 150 & 18.49 & \textbf{88.18} & 29.44 & 5.96 &	\textbf{32.70}
    & \textbf{104.86} & 29.04 & 6.65 & \textbf{34.28} \\

\bottomrule

\end{tabular}

\end{table}

\section{Experiments}

\subsection{Experimental Setup}
\label{sec:experimental_setup}
\textbf{Datasets.} We assess generation performance across four benchmark datasets: DrawBench~\cite{draw}, Pick-a-Pic~\cite{kirstain2023pick}, Geneval~\cite{ghosh2023geneval}, and PartiPrompts~\cite{parti}. Detailed experimental setups are provided in Appendix~\ref{app:exp_setup}, and results for PartiPrompts are included in Appendix~\ref{app:parti_results}.

\textbf{Metrics.} To evaluate results, we employ IR~\cite{xu2023imagereward} and HPSv2~\cite{hps2} as human preference metrics, ClipScore~\cite{hessel2021clipscore} for prompt alignment assessment, and AES~\cite{aes} for aesthetic quality analysis.

\textbf{Baselines.} Under our unified fixed point iteration framework, we systematically analyze five methods: CFG~\cite{cfg}, CFG++~\cite{chung2024cfg++}, Z-sampling~\cite{zsampling}, Resampling~\cite{resample}, and our proposed FSG. Leveraging the fixed point perspective, we enhance CFG and CFG++ by increasing iterations (denoted as $\times2$/$\times3$), a novel extension beyond prior literature~\cite{cfg,chung2024cfg++}. Experiments primarily adopt configurations from original studies. The fixed point interval in FSG is set within $[0.02T, 0.125T]$, with larger intervals and iterations allocated to early steps. Implementation details are provided in Appendix~\ref{app:exp_setup}.

\begin{figure}[tt]
\centering
\subfigure[Top-1 Rate.]{
    \includegraphics[width=0.475\textwidth]{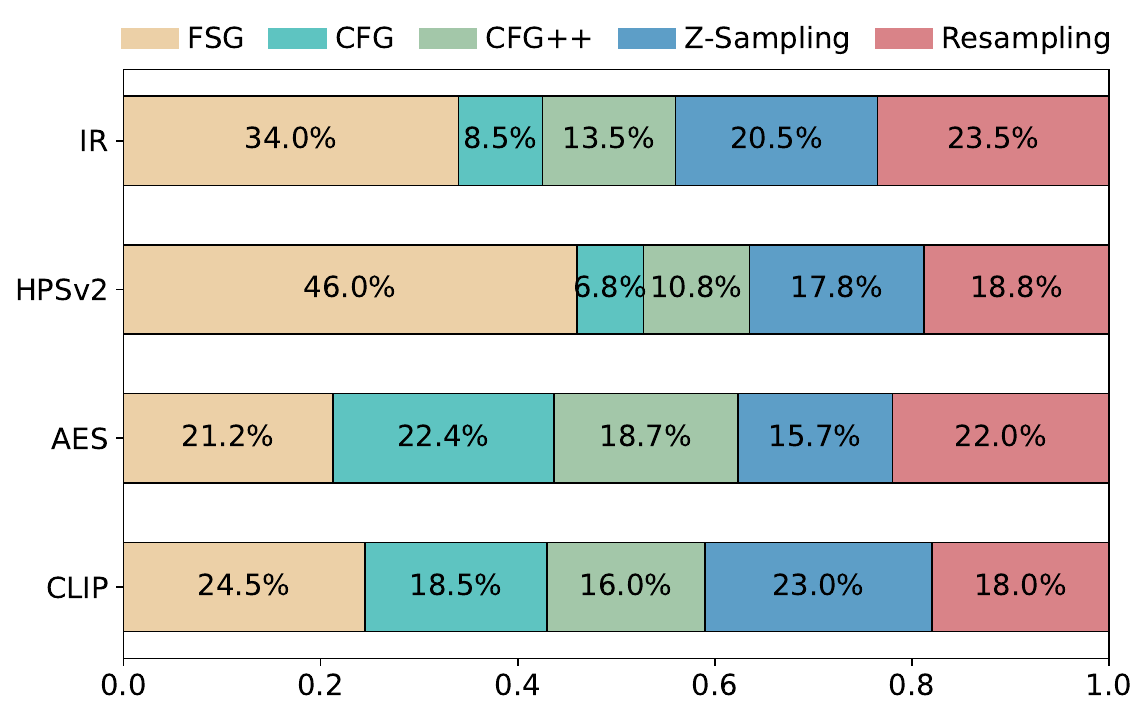}%
    \label{fig:DrawBench_50_win}%
}
\subfigure[Prediction gap after calibration.]{
    \includegraphics[width=0.475\textwidth]{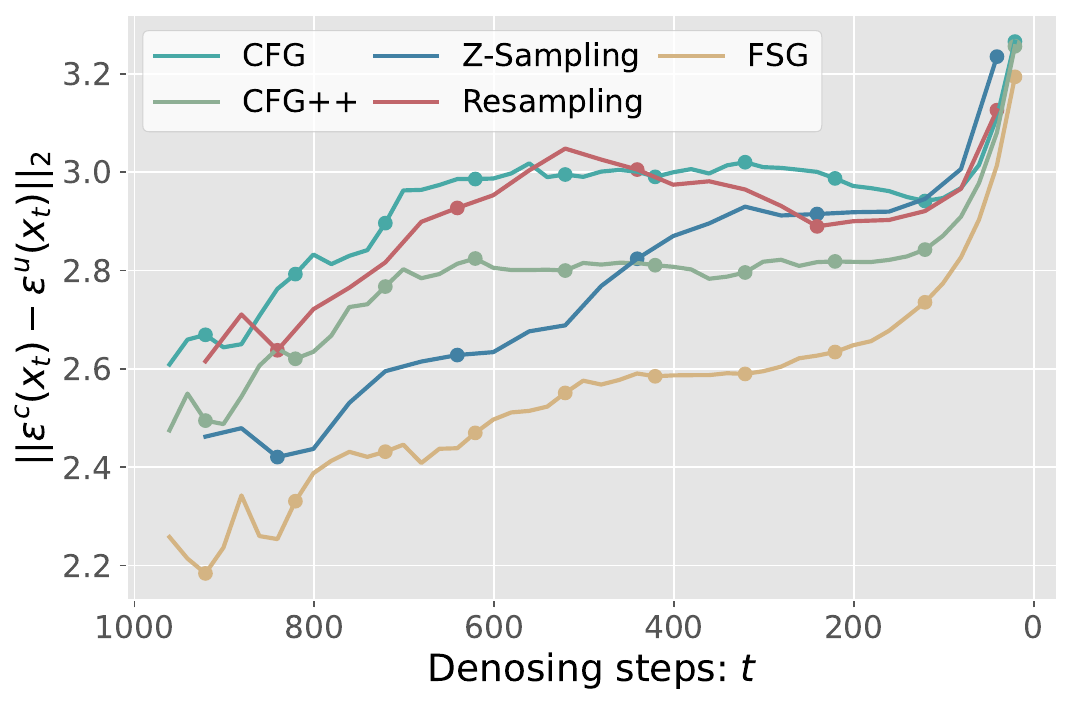}%
    \label{fig:DrawBench_50}%
}
\caption{Left: Proportion of samples that outperform the other four methods (Top-1 rate). Right: Prediction gap during denoising, indicating the effectiveness of the fixed point iteration. Dataset: DrawBench, NFE: 50, see Appendix~\ref{app:top1_rate} and ~\ref{app:prediction_gap} for more results.}
\label{fig:drawbench_analysis}
\end{figure}

\begin{table}[t]
\caption{Quantitative results on Geneval dataset. Model: SDXL; NFE: 50 (CFG), 150 (others).}
\label{tab:geneval_results}
\small
\centering
\setlength{\tabcolsep}{2.5pt}
\renewcommand{\arraystretch}{1.2}

  \begin{tabular}{l c|cccccc}

    \toprule
Method & Overall$\uparrow$ &	Single Object$\uparrow$ &	Two Object$\uparrow$ &	Counting$\uparrow$ &	Colors$\uparrow$ &	Position$\uparrow$ &	Color Attribution$\uparrow$ \\

\midrule
CFG &	48.39 \% &	97.50 \% &	61.62 \% &	22.50 \% &	78.72 \% &	\textbf{14.00 \%} &	16.00 \% \\

\midrule
CFG$\times 3$	&55.94 \% &	98.75 \% &	75.76 \% &	40.00 \% &	85.11 \% &	8.00 \% &	\textbf{28.00 \%} \\
CFG++$\times 3$&	56.03 \% &	97.50 \% &	78.79 \% &	45.00 \% &	81.91 \% &	10.00 \% &	23.00 \% \\
Z-sampling	&56.70 \% &	\textbf{100.00 \%} &	75.76 \% &	\textbf{46.25 \%} &	\textbf{86.17 \%} &	12.00 \% &	20.00 \% \\
Resampling&	56.65 \% &	\textbf{100.00 \%} &	\textbf{84.85 \%} &	40.00 \% &	84.04 \% &	7.00 \% &	24.00 \% \\
\rowcolor{skyblue!20}
FSG (ours) &	\textbf{57.95 \%} &	\textbf{100.00 \%} &	79.80 \% &	43.75 \% &	\textbf{86.17 \%} &	12.00 \% &	\textbf{28.00 \%} \\

\bottomrule

\end{tabular}

\end{table}

\subsection{Experimental Results}
\label{sec:experimental_results}
\textbf{Quantitative analysis.} We evaluate the performance of different methods at NFE (number of function evaluations)
of 50, 100, and 150 using SDXL with the DDIM sampler, as shown in \tabref{tab:main_results}. The following discussion analyzes design choices within the fixed point iteration framework:

\begin{enumerate}

    \item \textbf{Extended intervals and increased iterations enhance efficiency and alignment of FSG.} FSG demonstrates superior performance across datasets and NFEs, with particularly notable improvements at lower NFEs (50, 100).  As shown in \tabref{tab:main_results} and the Top-1 rate in \figref{fig:DrawBench_50_win}, FSG improves IR by 10.06 and achieves a Top-1 rate of 34\%, while improving HPSv2 by 0.34 with a Top-1 rate of 46\%. This improvement is attributed to more effective fixed point solving. The reduced fixed point error in \figref{fig:DrawBench_50} confirms that balanced subproblem decomposition enables better convergence and generation quality, consistent with Theorem~\ref{thm:main}. Additionally, longer intervals enhance semantic guidance and prompt-alignment in FSG, as evidenced by consistent CLIPScore and IR improvements across all NFEs.

    \item \textbf{Existing methods benefit from increased iterations.} Within the fixed point framework, CFG and CFG++ outperform vanilla CFG when the number of iterations is increased (denoted as $\times 2$/$\times 3$). This demonstrates the potential of our framework to enable test-time scaling for improved performance through additional inference resources, preferable to simply increasing inference steps (see Appendix~\ref{app:inference_steps}).

    \item \textbf{Strength schedulers and fixed point operators.} CFG++ achieves marginal gains over CFG through more stable intensity scheduling. Backward-forward operators with small intervals show limited improvement compared to linear operators at higher NFEs due to insufficient guidance, necessitating longer intervals for adequate semantic guidance.
\end{enumerate}

\textbf{Qualitative analysis.}  We present qualitative results in \figref{fig:qualitative_results}, comparing the performance of FSG with baseline methods. FSG achieves both high image aesthetics and strong adherence to prompt requirements. By leveraging longer intervals, FSG achieves more precise guidance for generating fine details (e.g., cat faces, text). Furthermore, FSG-generated images exhibit fewer structural or visual artifacts and align more closely with human preferences. These results highlight the robustness and versatility of FSG in addressing complex image generation tasks. We present additional visualizations and discuss failure cases in Appendix~\ref{app:visual_results}.

\begin{table}[t]
\caption{Quantitative results on different models and samplers. Dataset:Pick-a-pic; NFE: 50 (CFG), 150 (others); see Appendix~\ref{app:models_samplers} for more results.}
\label{tab:model_results}
\small
\centering
\setlength{\tabcolsep}{2pt}
\renewcommand{\arraystretch}{1.2}

  \begin{tabular}{l| cccc| cccc|cccc}

    \toprule
    \multicolumn{1}{c|}{Models} & \multicolumn{4}{c|}{SD-2.1~\cite{sd2}, DDIM} & \multicolumn{4}{c|}{Hunyuan-DiT~\cite{li2024hunyuan}, DDIM}& \multicolumn{4}{c}{SDXL, DDPM~\cite{ddpm}} \\
    \multicolumn{1}{c|}{Method}  &
    {IR$\uparrow$} & {HPSv2$\uparrow$}  & {AES$\uparrow$} & CLIP$\uparrow$ & {IR$\uparrow$} & {HPSv2$\uparrow$}  & {AES$\uparrow$} & CLIP$\uparrow$
    & {IR$\uparrow$} & {HPSv2$\uparrow$}  & {AES$\uparrow$} & CLIP$\uparrow$\\

    \midrule
CFG & -62.83 & 25.51 & 5.88 & 30.38 & 115.63 & 29.00 & 6.82 & 33.09 & 73.57 & 28.42 & 6.73 & 33.62\\

\midrule
CFG $\times3$ & 1.08&27.25&5.92&32.50 & 115.32 &28.98 & 6.50 &     32.88 & 91.37 & \textbf{28.69} & 6.62 & 33.73\\
CFG++$\times 3$ & 3.98 & 27.24 & 5.96 & 32.51 & 115.63 & 29.03 &6.63 & 32.97 & 89.17 & 28.79 & 6.60 & 33.60\\
Z-sampling & 3.65&27.24&6.07&32.60& 128.72& 29.23& 6.72     &33.42& 90.35 & 28.63 & \textbf{6.65} & 33.58\\
Resampling & 8.07&27.03&5.85&32.31 & 117.65 & 29.28 & \textbf{6.73} &     33.18& 90.92 & 28.64 & \textbf{6.65} & 33.73\\
\rowcolor{skyblue!20}
FSG (ours) & \textbf{16.26} & \textbf{27.60}&\textbf{6.10} & \textbf{32.80} & \textbf{132.88} & \textbf{29.37} & 6.68 & \textbf{33.48} & \textbf{91.53} & 28.56 & \textbf{6.65} & \textbf{33.79}\\

\bottomrule

\end{tabular}

\end{table}

\begin{figure}
    \centering
    \includegraphics[width=0.95\linewidth]{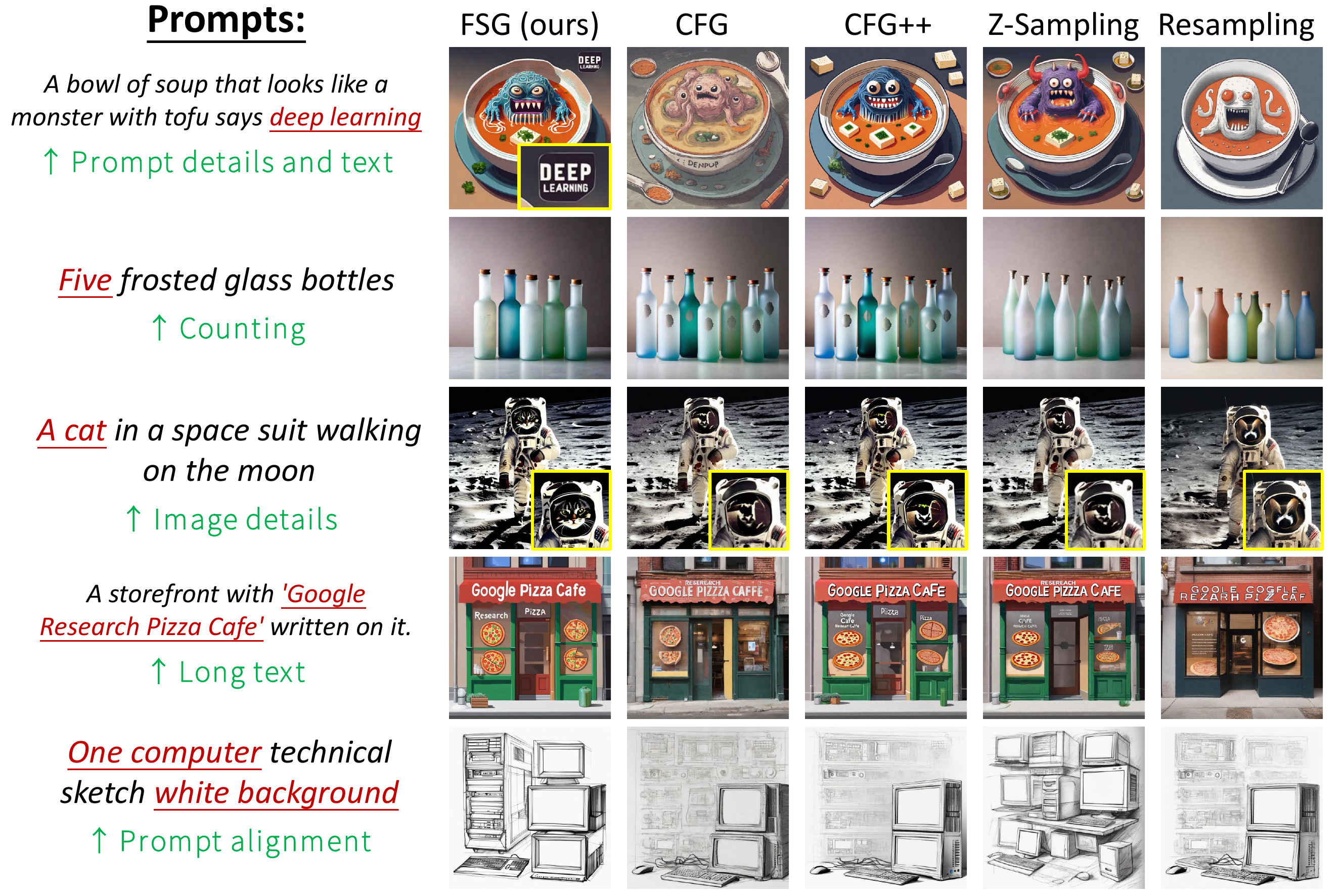}
    \caption{Results of qualitative analysis. Our proposed FSG demonstrates effectiveness in several dimensions including text, details, and counting. We present more examples in Appendix~\ref{app:visual_results}.}
    \label{fig:qualitative_results}
\end{figure}

\begin{figure}[t]
\centering

\begin{minipage}{0.46\textwidth}
\centering
\tabcaption{Quantitative results on ImageNet $256\times 256$. Model: DiT.}
\label{tab:imagenet_results}
\setlength{\tabcolsep}{2.5pt}
\renewcommand{\arraystretch}{1.2}
  \begin{tabular}{l|cc|cc}
    \toprule
    \multicolumn{1}{c|}{NFE} & \multicolumn{2}{c|}{25} & \multicolumn{2}{c}{50} \\
    \multicolumn{1}{c|}{Methods}& FID $\downarrow$ & Vendi $\uparrow$ & FID $\downarrow$ & Vendi $\uparrow$ \\
    \midrule
    CFG ($\times2$) & 17.81 & 3.44 & 14.69 & 3.79 \\
    CFG++ ($\times2$) & 13.27 & 3.91 & 8.85 & 4.43 \\
    Resample & 17.54 & 3.50 & 9.05 & 4.47 \\
    Z-sampling & 19.89 & 3.40 & 8.62 & 4.64 \\
    \rowcolor{skyblue!20}
    FSG (ours) & \textbf{10.56} & \textbf{4.73} & \textbf{7.91} & \textbf{5.79} \\
    \bottomrule
  \end{tabular}
\end{minipage}\hfill\begin{minipage}{0.52\textwidth}
\centering
\tabcaption{Synergistic effects of FSG and noise optimization model (NPNet) Dataset: Pick-a-Pic.}
\label{tab:noise_optimization_synergy}
\setlength{\tabcolsep}{2.5pt}
\renewcommand{\arraystretch}{1.2}
  \begin{tabular}{l|cccc}
    \toprule
    Methods & IR$\uparrow$ & HPSv2$\uparrow$ & AES$\uparrow$ & CLIP$\uparrow$ \\
    \midrule
    CFG50 & 82.14 & 28.46 & \textbf{6.73} & 33.53 \\
    \rowcolor{skyblue!20} FSG50 & 98.59 & 28.89 & 6.60 & \textbf{34.32} \\
    \rowcolor{skyblue!20}
    FSG100 & 102.82 & 29.05 & 6.66 & 34.30 \\
    \midrule
    NPNet & 91.66 & 28.60 & 6.70 & 33.57 \\
    \rowcolor{skyblue!20}
    \rowcolor{skyblue!20}
    +FSG50 & \textbf{112.64} & 29.04 & 6.54 & 34.09 \\
    \rowcolor{skyblue!20}
    +FSG100 & 111.83 & \textbf{29.15} & 6.57 & 34.13 \\
    \bottomrule
  \end{tabular}
\end{minipage}

\end{figure}

\textbf{Object-focused evaluation.} We assess fine-grained instruction compliance on Geneval~\cite{ghosh2023geneval}, as shown in \tabref{tab:geneval_results}. FSG significantly addresses the deficiencies of vanilla CFG in counting accuracy (+23.75\%) and two-object generation (+23.23\%), achieving state-of-the-art overall performance. Notably, CFG/CFG++ with increased fixed point iterations also demonstrate enhanced alignment, underscoring the benefits of fixed point iterations.

\textbf{Class conditional generation.} To investigate whether enhanced fixed point iterations reduce diversity,
we conduct experiments on the ImageNet~\cite{deng2009imagenet} $256\times 256$ conditional generation task using DiT~\cite{dit} models,
generating 1K images per class (totaling 50K images). We report FID~\cite{fid} and Vendi scores~\cite{friedman2022vendi} at NFE=25/50. As shown in Table~\ref{tab:imagenet_results}, FSG and CFG/CFG++$\times 2$ at NFE=50 both improve generation quality and diversity, indicating that fixed point iterations do not cause mode collapse.
We hypothesize this occurs because the iterations are performed locally in the neighborhood of $x_t$, thereby improving quality while preserving randomness.

\textbf{Different models and samplers.} Results across models (SD2.1~\cite{sd2} and Hunyuan-DiT~\cite{li2024hunyuan}) and samplers (DDPM~\cite{ddpm}) are presented in \tabref{tab:model_results}. Compared to other methods, FSG demonstrates greater improvement on the weaker SD2.1 baseline (IR: +8.19; HPSv2: +0.35) while still providing additional quality gains for the state-of-the-art Hunyuan-DiT model (IR: +4.16; HPSv2: +0.09). As a sample-agnostic framework, our method naturally extends to stochastic samplers without additional derivations, where fixed point iteration similarly improves prompt alignment.

\textbf{Synergy with orthogonal methods.}
As fixed point algorithms are sensitive to initial values, we explore the synergistic effects between FSG and
orthogonal approaches, including preference-aligned models~\cite{spo} and noise optimization methods~\cite{zhou2024golden}. SPO~\cite{spo} improves the noise predictions of the diffusion model via preference fine-tuning, while NPNet~\cite{zhou2024golden} employs lightweight networks to tailor initial noise to a given prompt. Both strategies supply better initial values to the fixed-point iteration, facilitating coarse-to-fine calibration. Experiments conducted with NPNet~\cite{zhou2024golden} (Table~\ref{tab:noise_optimization_synergy}) and SPO~\cite{spo} (Table~\ref{tab:spo_synergy}) demonstrate that
FSG alone outperforms noise optimization , but falls short of preference-aligned models. When FSG is integrated with either approach, synergistic improvements are observed, leading to further gains in aesthetic quality and prompt alignment (IR: 112.64 with NPNet; 117.93 with SPO). \figref{fig:spo_comparison} illustrates the progressive quality enhancement when combining FSG with SPO, confirming the compatibility of our framework with orthogonal approaches.

\begin{figure}[tb]
\centering

\begin{minipage}{0.5\textwidth}
\centering
\setlength{\tabcolsep}{2.5pt}
\renewcommand{\arraystretch}{1.2}
\tabcaption{Synergistic effects of FSG and preference alignment model (SPO). Dataset: Pick-a-pic.}
\label{tab:spo_synergy}

  \begin{tabular}{lcccc}

    \toprule
    \multicolumn{1}{c}{Method}  &
    {IR$\uparrow$} & {HPSv2$\uparrow$}  & {AES$\uparrow$} & CLIP$\uparrow$\\

    \midrule
CFG50 & 82.14 & 28.46 & 6.73 & 33.53 \\
\rowcolor{skyblue!20}
FSG100 & 102.82 & 29.05 & 6.66 & 34.30 \\

\midrule
SPO & 111.86 & 29.08 & 6.91 & 33.22 \\
\rowcolor{skyblue!20}
+FSG50 & 115.86 & 29.16 & 6.91 & 33.12 \\
\rowcolor{skyblue!20}
+FSG100 & \textbf{117.93} & \textbf{29.20} & \textbf{6.93} & 33.24 \\
\rowcolor{skyblue!20}
+FSG150 & 116.49 & 28.74 & 6.85 & \textbf{33.30} \\

\bottomrule

\end{tabular}

\end{minipage}\hfill\begin{minipage}[h]{0.48\linewidth}
\centering
\includegraphics[width=\linewidth]{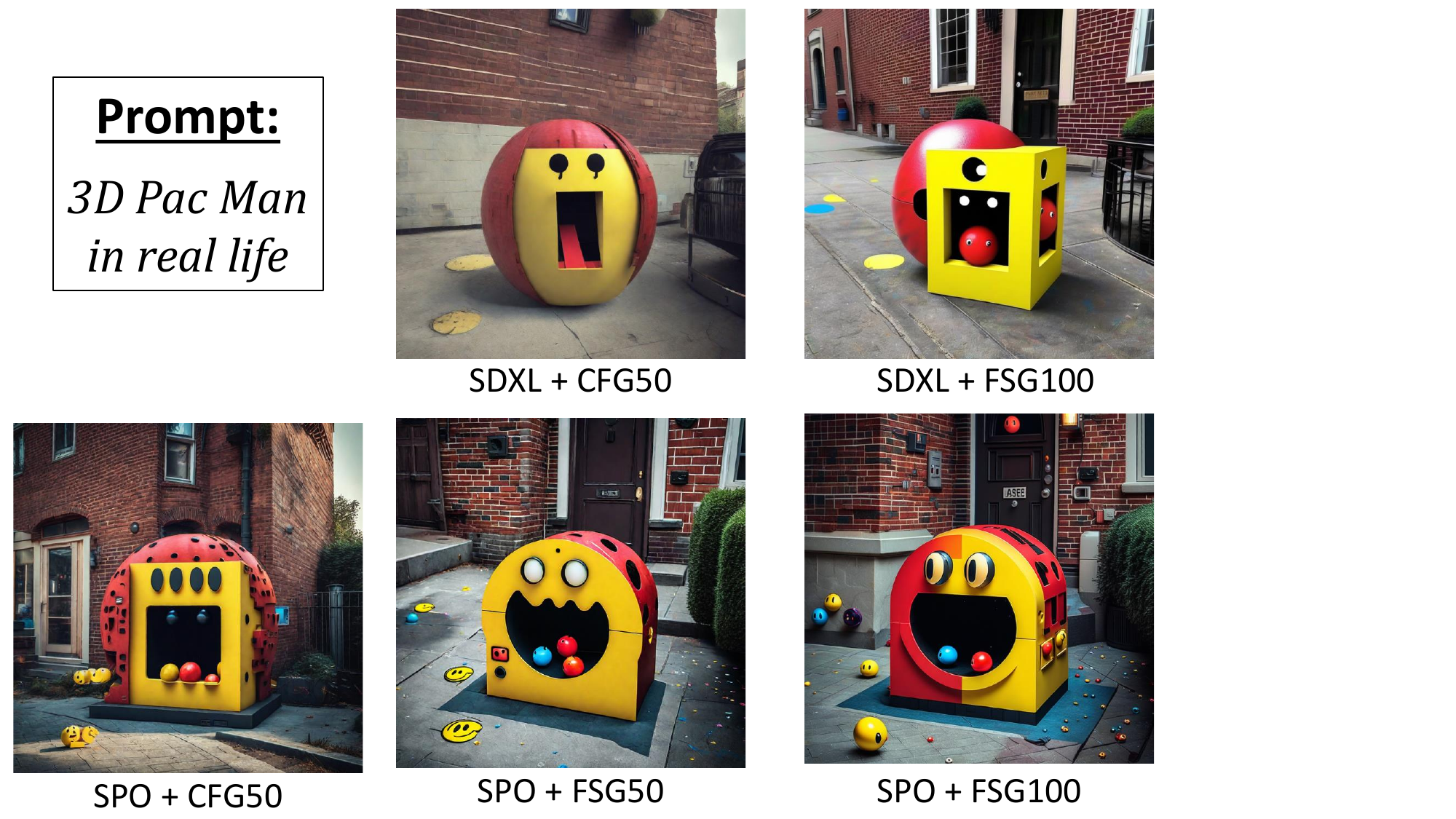}
\caption{FSG provides better guidance on SPO and progressively improves generated images.}
\label{fig:spo_comparison}
\end{minipage}

\end{figure}

\textbf{Ablation studies.} \tabref{tab:ablation_study} summarizes the impactful design choices at NFE = 50/150. Key findings:  (1) \textit{Consistency intervals} ($\times$2/\nicefrac{1}{2}): At low NFEs, appropriate interval selection provides more effective guidance than short-interval strategies. Overly large intervals introduce distant guidance that exceeds the current optimization stage, hindering fixed point iteration convergence.  (2) \textit{Guidance strength}: As shown in \figref{fig:guidance_strength}, FSG adapts to $\lambda\in[0.4, 1.3]$ (beyond the $[0.5,1]$ range in CFG++). Fixed point iteration narrows the conditional/unconditional gap while suppressing quality degradation from excessive guidance. The scheduler of CFG++ prevents premature intensity decay, making it suitable for our approach.  (3) \textit{Iterations} ($\times$2/\nicefrac{1}{2}): While 2-3 iterations suffice for high-quality results, excessive iterations introduce computational overhead and risk divergence from the data manifold.  (4) \textit{Timestep prioritization}: Early-stage ((\nicefrac{2}{3}$T$,$T$]) foresighted guidance drives quality gains, while later timestep calibrations refine details.

\begin{figure}[tb]
\centering

\begin{minipage}{0.42\textwidth}
\tabcaption{Ablation study on different choices (metrics differences shown). Dataset: Pick-a-Pic.}
\label{tab:ablation_study}
\small
\centering
\setlength{\tabcolsep}{1.5pt}
\renewcommand{\arraystretch}{1.2}

  \begin{tabular}{lcccc}

    \toprule
    \multicolumn{1}{c}{Method}  &
    {IR$\uparrow$} & {HPSv2$\uparrow$}  & {AES$\uparrow$} & CLIP$\uparrow$\\

    \midrule
    \rowcolor{skyblue!20}
    FSG50 & 98.59 & 28.89 & 6.60 & 34.32 \\
    Interval$\times$\nicefrac{1}{2}  & -8.20&-0.04 & 0.01& -0.13\\
Interval$\times$2 & -2.40 & -0.12 & -0.01 & -0.23\\

\midrule
\rowcolor{skyblue!20}
 FSG150 &  104.86 & 29.04 & 6.65 & 34.28 \\

Iterations$\times$\nicefrac{1}{2} & -6.16 & -0.21 & +0.03 & -0.23\\
Iterations$\times$2 & -2.41 & -0.50 & -0.13 & -0.07 \\
Early & -4.82 & -0.19 & -0.08 & -0.14\\
Early+Mid & -2.33 & -0.12 & -0.07 & -0.14 \\
 w.o. CFG++ & -4.78 & -0.08 & +0.03 & -0.03\\

\bottomrule

\end{tabular}

\end{minipage}\hfill\begin{minipage}[h]{0.56\linewidth}
\centering
\includegraphics[width=\linewidth]{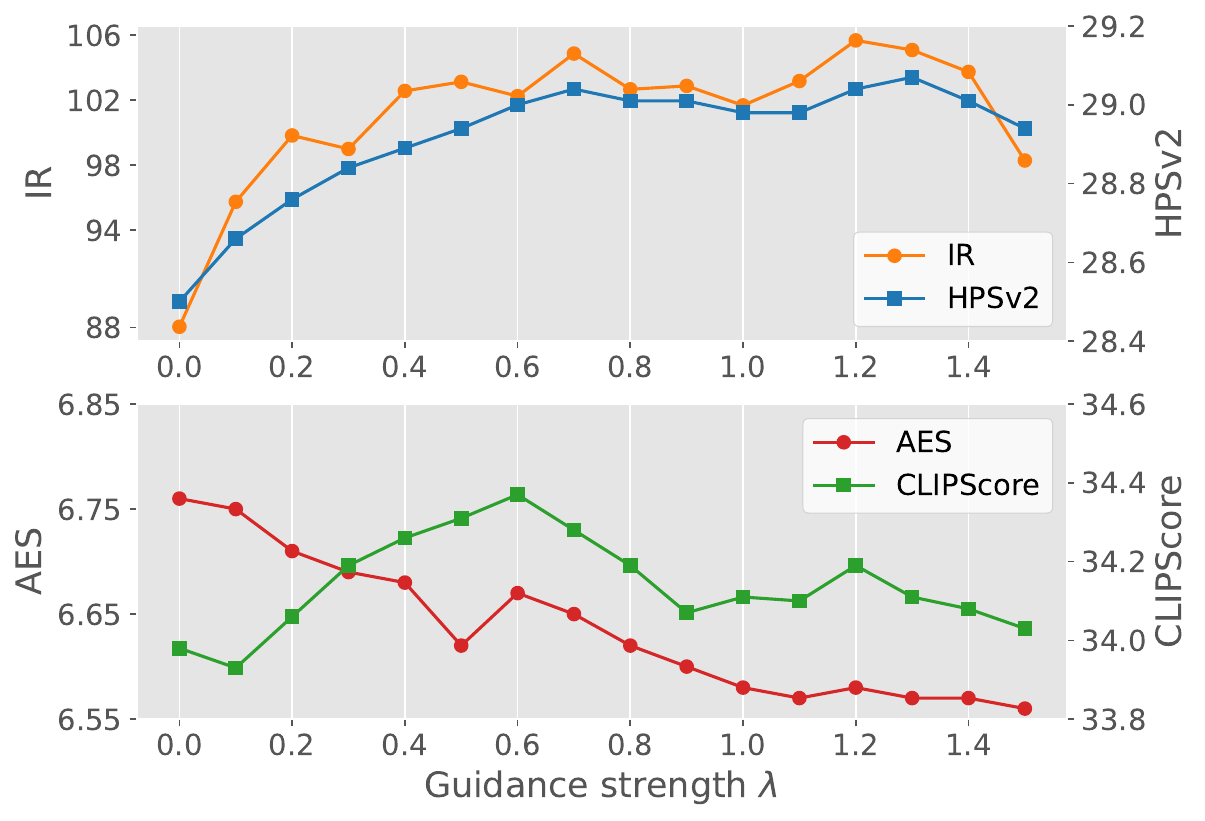}
\caption{Guidance strength analysis. FSG shows robustness on $\lambda\in[0.4,1.3]$. Dataset: Pick-a-Pic; NFE:150.}
\label{fig:guidance_strength}
\end{minipage}

\end{figure}

\section{Related Works}
\textbf{Classifier-Free Guidance~\cite{cfg} (CFG)} is crucial for modern text-to-image diffusion models, yet its mechanisms remain unclear. While initially framed as sampling from $\tilde{p}(x | c) \propto p(x)p(c | x)^w$, this interpretation has proven inconsistent with implementation behaviors~\cite{chidambaram2024does}. Recent works pursue distinct objectives: sampling from $\tilde{p}(x | c) \propto  p(x | c)R(x,c)$~\cite{gao2025reg}, minimizing score distillation sampling loss~\cite{chung2024cfg++}, functioning as a predictor–corrector~\cite{pcg}, and satisfying Fokker-Planck equations~\cite{cg}. However, these approaches' entanglement with diffusion sampling dynamics impedes both theoretical progress and unified method comparison. Our fixed point iteration framework circumvents this coupling through a sampling-independent interpretation.

\textbf{The golden path phenomenon} states that specific noise yields superior generation outcomes~\cite{qi2024not,zhou2024golden}. Empirical observations demonstrate that noise capable of unconditionally generating an image matching condition $c$ achieves enhanced performance when guided by $c$~\cite{qi2024not,zsampling}. This observation has motivated various approaches, including optimizing initial noise~\cite{guo2024initno,qi2024not}, neural network-based noise prediction~\cite{zhou2024golden}, and progressive refinement during inference~\cite{zsampling}. While effective, these techniques incur significant computational overhead. To the best of our knowledge, our work presents the first established connection between this phenomenon and CFG.

\section{Conclusion}
\label{sec:conclusion}
In this work, we introduce a unified framework for classifier-free guidance (CFG) in text-to-image diffusion models by reinterpreting conditional generation as a calibration process toward a golden path. First, we propose fixed point iteration as a methodological tool to enforce latent consistency between conditional and unconditional outputs, establishing a sampling-independent framework with broader design applicability. We unify existing CFG and its variants as short-interval single-step approaches under this framework, which are theoretically inefficient. Second, we present Foresight Guidance (FSG), a multi-step iteration paradigm that reduces subproblems while enabling long-horizon guidance, achieving an enhanced alignment-quality balance. Comprehensive experiments demonstrate the superiority of FSG over state-of-the-art methods. We anticipate that our unified perspective will catalyze advancements in adaptive, efficient diffusion guidance strategies.

\section*{Acknowledgements}
We would like to thank the anonymous reviewers for their valuable suggestions on theory and experiments. This work is supported by the Project of Hetao Shenzhen-HKUST Innovation Cooperation Zone HZQBKCZYB-2020083.

\normalem
\bibliographystyle{plain}

\bibliography{ref}

\begin{thebibliography}{10}

\bibitem{zsampling}
Lichen Bai, Shitong Shao, Zikai Zhou, Zipeng Qi, Zhiqiang Xu, Haoyi Xiong, and Zeke Xie.
\newblock Zigzag diffusion sampling: The path to success is zigzag.
\newblock {\em arXiv preprint arXiv:2412.10891}, 2024.

\bibitem{pcg}
Arwen Bradley and Preetum Nakkiran.
\newblock Classifier-free guidance is a predictor-corrector.
\newblock {\em arXiv preprint arXiv:2408.09000}, 2024.

\bibitem{chen2023sampling}
Sitan Chen, Sinho Chewi, Jerry Li, Yuanzhi Li, Adil Salim, and Anru~R Zhang.
\newblock Sampling is as easy as learning the score: theory for diffusion models with minimal data assumptions.
\newblock In {\em International Conference on Learning Representations}, 2023.

\bibitem{chidambaram2024does}
Muthu Chidambaram, Khashayar Gatmiry, Sitan Chen, Holden Lee, and Jianfeng Lu.
\newblock What does guidance do? a fine-grained analysis in a simple setting.
\newblock {\em arXiv preprint arXiv:2409.13074}, 2024.

\bibitem{chung2024cfg++}
Hyungjin Chung, Jeongsol Kim, Geon~Yeong Park, Hyelin Nam, and Jong~Chul Ye.
\newblock Cfg++: Manifold-constrained classifier free guidance for diffusion models.
\newblock {\em arXiv preprint arXiv:2406.08070}, 2024.

\bibitem{deng2009imagenet}
Jia Deng, Wei Dong, Richard Socher, Li-Jia Li, Kai Li, and Li~Fei-Fei.
\newblock Imagenet: A large-scale hierarchical image database.
\newblock In {\em 2009 IEEE conference on computer vision and pattern recognition}, pages 248--255. Ieee, 2009.

\bibitem{friedman2022vendi}
Dan Friedman and Adji~Bousso Dieng.
\newblock The vendi score: A diversity evaluation metric for machine learning.
\newblock {\em arXiv preprint arXiv:2210.02410}, 2022.

\bibitem{fu2024unveil}
Hengyu Fu, Zhuoran Yang, Mengdi Wang, and Minshuo Chen.
\newblock Unveil conditional diffusion models with classifier-free guidance: A sharp statistical theory.
\newblock {\em arXiv preprint arXiv:2403.11968}, 2024.

\bibitem{gao2025reg}
Zhengqi Gao, Kaiwen Zha, Tianyuan Zhang, Zihui Xue, and Duane~S Boning.
\newblock Reg: Rectified gradient guidance for conditional diffusion models.
\newblock {\em arXiv preprint arXiv:2501.18865}, 2025.

\bibitem{ghosh2023geneval}
Dhruba Ghosh, Hannaneh Hajishirzi, and Ludwig Schmidt.
\newblock Geneval: An object-focused framework for evaluating text-to-image alignment.
\newblock {\em Advances in Neural Information Processing Systems}, 36:52132--52152, 2023.

\bibitem{guo2024initno}
Xiefan Guo, Jinlin Liu, Miaomiao Cui, Jiankai Li, Hongyu Yang, and Di~Huang.
\newblock Initno: Boosting text-to-image diffusion models via initial noise optimization.
\newblock In {\em CVPR}, pages 9380--9389, 2024.

\bibitem{hessel2021clipscore}
Jack Hessel, Ari Holtzman, Maxwell Forbes, Ronan~Le Bras, and Yejin Choi.
\newblock Clipscore: A reference-free evaluation metric for image captioning.
\newblock {\em arXiv preprint arXiv:2104.08718}, 2021.

\bibitem{fid}
Martin Heusel, Hubert Ramsauer, Thomas Unterthiner, Bernhard Nessler, and Sepp Hochreiter.
\newblock Gans trained by a two time-scale update rule converge to a local nash equilibrium.
\newblock {\em Advances in neural information processing systems}, 30, 2017.

\bibitem{ddpm}
Jonathan Ho, Ajay Jain, and Pieter Abbeel.
\newblock Denoising diffusion probabilistic models.
\newblock {\em Advances in neural information processing systems}, 33:6840--6851, 2020.

\bibitem{cfg}
Jonathan Ho and Tim Salimans.
\newblock Classifier-free diffusion guidance.
\newblock {\em arXiv preprint arXiv:2207.12598}, 2022.

\bibitem{karras2022elucidating}
Tero Karras, Miika Aittala, Timo Aila, and Samuli Laine.
\newblock Elucidating the design space of diffusion-based generative models.
\newblock {\em Advances in neural information processing systems}, 35:26565--26577, 2022.

\bibitem{kirstain2023pick}
Yuval Kirstain, Adam Polyak, Uriel Singer, Shahbuland Matiana, Joe Penna, and Omer Levy.
\newblock Pick-a-pic: An open dataset of user preferences for text-to-image generation.
\newblock {\em Advances in Neural Information Processing Systems}, 36:36652--36663, 2023.

\bibitem{li2024hunyuan}
Zhimin Li, Jianwei Zhang, Qin Lin, Jiangfeng Xiong, Yanxin Long, Xinchi Deng, Yingfang Zhang, Xingchao Liu, Minbin Huang, Zedong Xiao, et~al.
\newblock Hunyuan-dit: A powerful multi-resolution diffusion transformer with fine-grained chinese understanding.
\newblock {\em arXiv preprint arXiv:2405.08748}, 2024.

\bibitem{spo}
Zhanhao Liang, Yuhui Yuan, Shuyang Gu, Bohan Chen, Tiankai Hang, Ji~Li, and Liang Zheng.
\newblock Step-aware preference optimization: Aligning preference with denoising performance at each step.
\newblock {\em arXiv preprint arXiv:2406.04314}, 2(3), 2024.

\bibitem{resample}
Andreas Lugmayr, Martin Danelljan, Andres Romero, Fisher Yu, Radu Timofte, and Luc Van~Gool.
\newblock Repaint: Inpainting using denoising diffusion probabilistic models.
\newblock In {\em CVPR}, pages 11461--11471, 2022.

\bibitem{dit}
William Peebles and Saining Xie.
\newblock Scalable diffusion models with transformers.
\newblock In {\em Proceedings of the IEEE/CVF international conference on computer vision}, pages 4195--4205, 2023.

\bibitem{podell2023sdxl}
Dustin Podell, Zion English, Kyle Lacey, Andreas Blattmann, Tim Dockhorn, Jonas M{\"u}ller, Joe Penna, and Robin Rombach.
\newblock Sdxl: Improving latent diffusion models for high-resolution image synthesis.
\newblock {\em arXiv preprint arXiv:2307.01952}, 2023.

\bibitem{qi2024not}
Zipeng Qi, Lichen Bai, Haoyi Xiong, and Zeke Xie.
\newblock Not all noises are created equally: Diffusion noise selection and optimization.
\newblock {\em arXiv preprint arXiv:2407.14041}, 2024.

\bibitem{sd2}
Robin Rombach, Andreas Blattmann, Dominik Lorenz, Patrick Esser, and Bj{\"o}rn Ommer.
\newblock High-resolution image synthesis with latent diffusion models.
\newblock In {\em CVPR}, pages 10684--10695, 2022.

\bibitem{draw}
Chitwan Saharia, William Chan, Saurabh Saxena, Lala Li, Jay Whang, Emily~L Denton, Kamyar Ghasemipour, Raphael Gontijo~Lopes, Burcu Karagol~Ayan, Tim Salimans, et~al.
\newblock Photorealistic text-to-image diffusion models with deep language understanding.
\newblock {\em Advances in neural information processing systems}, 35:36479--36494, 2022.

\bibitem{aes}
Christoph Schuhmann, Romain Beaumont, Richard Vencu, Cade Gordon, Ross Wightman, Mehdi Cherti, Theo Coombes, Aarush Katta, Clayton Mullis, Mitchell Wortsman, et~al.
\newblock Laion-5b: An open large-scale dataset for training next generation image-text models.
\newblock {\em Advances in neural information processing systems}, 35:25278--25294, 2022.

\bibitem{ddim}
Jiaming Song, Chenlin Meng, and Stefano Ermon.
\newblock Denoising diffusion implicit models.
\newblock {\em arXiv preprint arXiv:2010.02502}, 2020.

\bibitem{song2020score}
Yang Song, Jascha Sohl-Dickstein, Diederik~P Kingma, Abhishek Kumar, Stefano Ermon, and Ben Poole.
\newblock Score-based generative modeling through stochastic differential equations.
\newblock {\em arXiv preprint arXiv:2011.13456}, 2020.

\bibitem{hps2}
Xiaoshi Wu, Yiming Hao, Keqiang Sun, Yixiong Chen, Feng Zhu, Rui Zhao, and Hongsheng Li.
\newblock Human preference score v2: A solid benchmark for evaluating human preferences of text-to-image synthesis.
\newblock {\em arXiv preprint arXiv:2306.09341}, 2023.

\bibitem{xu2023imagereward}
Jiazheng Xu, Xiao Liu, Yuchen Wu, Yuxuan Tong, Qinkai Li, Ming Ding, Jie Tang, and Yuxiao Dong.
\newblock Imagereward: Learning and evaluating human preferences for text-to-image generation.
\newblock {\em Advances in Neural Information Processing Systems}, 36:15903--15935, 2023.

\bibitem{xu2023restart}
Yilun Xu, Mingyang Deng, Xiang Cheng, Yonglong Tian, Ziming Liu, and Tommi Jaakkola.
\newblock Restart sampling for improving generative processes.
\newblock {\em Advances in Neural Information Processing Systems}, 36:76806--76838, 2023.

\bibitem{parti}
Jiahui Yu, Yuanzhong Xu, Jing~Yu Koh, Thang Luong, Gunjan Baid, Zirui Wang, Vijay Vasudevan, Alexander Ku, Yinfei Yang, Burcu~Karagol Ayan, et~al.
\newblock Scaling autoregressive models for content-rich text-to-image generation.
\newblock {\em arXiv preprint arXiv:2206.10789}, 2(3):5, 2022.

\bibitem{cg}
Candi Zheng and Yuan Lan.
\newblock Characteristic guidance: Non-linear correction for diffusion model at large guidance scale.
\newblock {\em arXiv preprint arXiv:2312.07586}, 2023.

\bibitem{zhou2024golden}
Zikai Zhou, Shitong Shao, Lichen Bai, Zhiqiang Xu, Bo~Han, and Zeke Xie.
\newblock Golden noise for diffusion models: A learning framework.
\newblock {\em arXiv preprint arXiv:2411.09502}, 2024.

\end{thebibliography}

\clearpage

\appendix

\section{Unified Framework}
\label{app:unified_framework}

In this section, we establish a theoretical framework that unifies guidance methods under the perspective of fixed-point iterations. Through systematic examination of Classifier-Free Guidance (CFG), its variant CFG++, and reflective sampling techniques including Z-Sampling and Resampling, we demonstrate how these approaches achieve conditional generation through distinct yet analogous fixed point operators. Our framework decouples guidance to denoising-agnostic calibration steps, facilitating the design of guidance algorithms.
\label{sec:uni}

\subsection{Classifier-Free Guidance (CFG)\texorpdfstring{~\cite{cfg}}{ [CFG]}}
Classifier-Free Guidance (CFG) enhances conditional generation by interpolating conditional and unconditional noise predictions during the diffusion process. Our analysis reveals that CFG's forward process corresponds to a linear fixed-point operator with consistency intervals spanning $t \to t-1$, requiring 1 NFE per iteration.

To formalize this, we begin with the DDIM update step governed by the predicted noise $\epsilon(x_t)$:
\begin{equation}
x_{t-1} = \frac{\sqrt{\bar{\alpha}_{t-1}}}{\sqrt{\bar{\alpha}_t}}\left[x_t - \sqrt{1-\bar{\alpha}_t}\epsilon(x_t)\right]+\sqrt{1-\bar{\alpha}_{t-1}}\epsilon(x_t),
\end{equation}
where $\bar{\alpha}_t = \prod_{i=1}^t \alpha_i = \alpha_t\bar{\alpha}_{t-1}$. Through algebraic rearrangement, we derive an equivalent representation that better elucidates the noise dependency:
\begin{align}
x_{t-1} &= \frac{\sqrt{\bar{\alpha}_{t-1}}}{\sqrt{\bar{\alpha}_t}} x_t + \left(\sqrt{1-\bar{\alpha}_{t-1}} - \frac{\sqrt{\bar{\alpha}_{t-1}}}{\sqrt{\bar{\alpha}_t}}\sqrt{1-\bar{\alpha}_t}\right)\epsilon(x_t) \nonumber \\
&= \frac{\sqrt{\bar{\alpha}_{t-1}}}{\sqrt{\bar{\alpha}_t}} x_t + \left(\sqrt{1-\bar{\alpha}_{t-1}} - \frac{\sqrt{1-\bar{\alpha}_t}}{\sqrt{\alpha_t}}\right)\epsilon(x_t).
\end{align}

CFG introduces a rescaled noise prediction to amplify conditional guidance:
\begin{equation}
\epsilon^w(x_t) = \epsilon^u(x_t) + w(\epsilon^c(x_t) - \epsilon^u(x_t)),
\end{equation}
where $w > 1$ modulates guidance strength. Substituting $\epsilon^w(x_t)$ into the update rule yields:
\begin{align}
x_{t-1} &= \frac{\sqrt{\bar{\alpha}_{t-1}}}{\sqrt{\bar{\alpha}_t}} x_t + \left(\sqrt{1-\bar{\alpha}_{t-1}} - \frac{\sqrt{1-\bar{\alpha}_t}}{\sqrt{\alpha_t}}\right) \nonumber \\
&\quad \times \left[\epsilon^u(x_t) + w(\epsilon^c(x_t) - \epsilon^u(x_t))\right].
\end{align}

This formulation admits an insightful reinterpretation as a two-step process. First, the latent variable $\hat{x}_t$ is calibrated via:
\begin{equation}
\hat{x}_t = x_t - w\left(\sqrt{1-\bar{\alpha}_t} - \sqrt{\alpha_t - \bar{\alpha}_t}\right)(\epsilon^c(x_t) - \epsilon^u(x_t)),
\end{equation}
followed by an unconditional denoise:
\begin{equation}
x_{t-1} = \frac{\sqrt{\bar{\alpha}_{t-1}}}{\sqrt{\bar{\alpha}_t}} \hat{x}_t + \left(\sqrt{1-\bar{\alpha}_{t-1}} - \frac{\sqrt{1-\bar{\alpha}_t}}{\sqrt{\alpha_t}}\right)\epsilon^u(x_t).
\end{equation}

The iterative variant CFG$\times K$ is detailed in Algorithm~\ref{alg:cfg}.

\subsection{CFG++ \texorpdfstring{~\cite{chung2024cfg++}}{ [CFG++]}}
CFG++ derives conditional guidance through manifold constraints formulated from an inverse problem perspective, closely aligned with CFG in implementation. Our analysis demonstrates that CFG++ similarly corresponds to a linear fixed-point operator iteration but diverges from CFG solely in its strength scheduling strategy, retaining the consistency interval $t\to t-1$ at a computational cost of 1 NFE per iteration.

CFG++ adjust the guidance strength through parameter $\lambda$ applied to the noise difference term:
\begin{align}
x_{t-1} &= \frac{\sqrt{\bar{\alpha}_{t-1}}}{\sqrt{\bar{\alpha}_t}}\left\{x_t - \sqrt{1-\bar{\alpha}_t}[\epsilon^u(x_t) + \lambda(\epsilon^c(x_t) - \epsilon^u(x_t))]\right\} \nonumber \\
&\quad + \sqrt{1-\bar{\alpha}_{t-1}}\epsilon^u(x_t).
\end{align}

This admits similar calibration-interpretation, where the latent calibration becomes:
\begin{equation}
\hat{x}_t = x_t - \lambda\sqrt{1-\bar{\alpha}_t}(\epsilon^c(x_t) - \epsilon^u(x_t)),
\end{equation}
followed by the same unconditional update:
\begin{equation}
x_{t-1} = \frac{\sqrt{\bar{\alpha}_{t-1}}}{\sqrt{\bar{\alpha}_t}} \hat{x}_t + \left(\sqrt{1-\bar{\alpha}_{t-1}} - \frac{\sqrt{1-\bar{\alpha}_t}}{\sqrt{\alpha_t}}\right)\epsilon^u(x_t).
\end{equation}

Both CFG and CFG++ exhibit fixed-point behavior through linear operators:
\begin{align}
F_{\text{CFG}}(x_t) &= x_t + w\xi_t(\epsilon^c(x_t) - \epsilon^u(x_t)), \\
F_{\text{CFG++}}(x_t) &= x_t + \lambda\tilde{\xi}_t(\epsilon^c(x_t) - \epsilon^u(x_t)),
\end{align}
where $\xi_t=\left(\sqrt{1-\bar{\alpha}_t} - \sqrt{\alpha_t - \bar{\alpha}_t}\right), \tilde{\xi}_t=\sqrt{1-\bar{\alpha}_t}$ respectively. Note that fixed-point $F(x_t) = x_t$ implies consistency $\epsilon^c(x_t) = \epsilon^u(x_t)$, enforcing alignment between conditional and unconditional update paths: $f^{u}_{t\to t-1}(x_t) = f^{c}_{t\to t-1}(x_t)$. The iterative variant CFG++$\times K$ is detailed in Algorithm~\ref{alg:cfgpp}.

\begin{figure}[h]
\centering

\begin{minipage}{0.48\textwidth}

\begin{algorithm}[H]
\caption{CFG$\times K$}
\label{alg:cfg}
\small
\SetAlgoLined
\SetKwInOut{Input}{Input}
\SetKwInOut{Output}{Output}
\Input{Initial noise $x_T$, Condition $c$, Timesteps $T$, Iterations $K$, Strength $w>1$.}
\Output{Generated image $x_0$}
\For{$t \leftarrow T$ \KwTo $1$}{
    $x_t^{(0)} = x_t$\;
    \textit{\textcolor{skyblue}{CFG Fixed Point Calibration}}\;
    \For{$k \leftarrow 1$ \KwTo $K$}{
        $\xi_t = \sqrt{1-\bar{\alpha}_t} - \sqrt{\alpha_t - \bar{\alpha}_t}$\;
        $x_t^{(k)} = x_t^{(k-1)} - w\xi_t\Delta\epsilon(x_t^{(k-1)})$\;
    }
    \textit{\textcolor{skyblue}{Denoising Step}}\;
    $x_{t-1} = \text{Sampler}(x_t^{(K)}, \epsilon^u(x_t^{(K-1)}))$\;
}
\Return $x_0$
\end{algorithm}

\end{minipage}\hfill\begin{minipage}{0.48\textwidth}

\begin{algorithm}[H]
\caption{CFG++$\times K$}
\label{alg:cfgpp}
\small
\SetAlgoLined
\SetKwInOut{Input}{Input}
\SetKwInOut{Output}{Output}
\Input{Initial noise $x_T$, Condition $c$, Timesteps $T$, Iterations $K$, Strength $\lambda\in[0,1]$.}
\Output{Generated image $x_0$}
\For{$t \leftarrow T$ \KwTo $1$}{
    $x_t^{(0)} = x_t$\;
    \textit{\textcolor{skyblue}{CFG++ Fixed Point Calibration}}\;
    \For{$k \leftarrow 1$ \KwTo $K$}{
        $\tilde{\xi}_t = \sqrt{1-\bar{\alpha}_t}$\;
        $x_t^{(k)} =  x_t^{(k-1)} - \lambda\tilde{\xi}_t\Delta\epsilon(x_t^{(k-1)})$\;
    }
    \textit{\textcolor{skyblue}{Denoising Step}}\;
    $x_{t-1} = \text{Sampler}(x_t^{(K)}, \epsilon^u(x_t^{(K-1)}))$\;
}
\Return $x_0$
\end{algorithm}

\end{minipage}

\end{figure}

\subsection{Z-Sampling\texorpdfstring{~\cite{zsampling}}{[Zs]}}
Z-sampling augments conditional guidance by reflective sampling steps: inversion using unconditional noise followed by forward using high-strength guided noise. This procedure is mathematically equivalent to a fixed-point iteration using an backward-forward operator. When integrated with CFG after reflective sampling updates, the process achieves a consistency interval of $t+1\to t-1$ at a cost of 3 NFE per iteration.

Z-Sampling enforces backward-forward consistency via the relation:
\begin{equation}
f^{u}_{t\to t+1}(x_t) = f^{\gamma}_{t\to t+1}(x_t) \implies x_t = f^{\gamma^{-1}}_{t\to t+1} \circ f^{u}_{t\to t+1}(x_t).
\end{equation}
Here, $f^\gamma$ denotes denoising with noise $\epsilon^u+\gamma(\epsilon^c-\epsilon^u)$. In fact, the equality $f^{u}_{t\to t+1}(x_t) = f^{\gamma}_{t\to t+1}(x_t)$ can be viewed as a generalization of $f^{u}_{t\to t+1}(x_t) = f^{c}_{t\to t+1}(x_t)$. This stems from the relationship: $\epsilon^{u}(x_t) = \epsilon^{c}(x_t)$ implies $\epsilon^{u}(x_t) = \epsilon^{u}(x_t) + \gamma[\epsilon^{c}(x_t) - \epsilon^{u}(x_t)]$, which establishes that $f^{u}_{t\to t+1}(x_t) = f^{c}_{t\to t+1}(x_t)$ can be extended to $f^{u}_{t\to t+1}(x_t) = f^{\gamma}_{t\to t+1}(x_t)$.

Approximating the inverse process $f^{\gamma^{-1}}_{t\to t+1} \approx f^{\gamma}_{t+1\to t}$, we derive:
\begin{equation}
\tilde{x}_t = f^{c}_{t+1\to t} \circ f^{u}_{t\to t+1}(x_t).
\end{equation}
Subsequent application of CFG's calibration step yields:
\begin{align}
\hat{x}_t &= \tilde{x}_t - w\left(\sqrt{1-\bar{\alpha}_t} - \sqrt{\alpha_t - \bar{\alpha}_t}\right)(\epsilon^c(\tilde{x}_t) - \epsilon^u(\tilde{x}_t)), \\
x_{t-1} &= \frac{\sqrt{\bar{\alpha}_{t-1}}}{\sqrt{\bar{\alpha}_t}} \hat{x}_t + \left(\sqrt{1-\bar{\alpha}_{t-1}} - \frac{\sqrt{1-\bar{\alpha}_t}}{\sqrt{\alpha_t}}\right)\epsilon^u(\tilde{x}_t).
\end{align}

This constructs a composite fixed-point iteration over the extended interval $t+1\to t-1$.

\subsection{Resampling\texorpdfstring{~\cite{resample}}{ [Re]}}
Resampling also utilizes backward-forward fixed-point iterations but differs by incorporating a model-free stochastic noise function for the inversion step. Consequently, it maintains the $t+1\to t-1$ update interval while reducing the computational cost to 2 NFE per iteration.

The stochastic backward step is defined as:
\begin{equation}
n_{t\to t+1}(x_t) = \sqrt{\alpha_{t+1}}x_t + \sqrt{1-\alpha_{t+1}}\epsilon, \quad \epsilon \sim \mathcal{N}(0,\mathbf{I}),
\end{equation}
which leads to the complete update sequence:
\begin{align}
\tilde{x}_t &= f^{\gamma}_{t+1\to t} \circ n^{u}_{t\to t+1}(x_t), \\
\hat{x}_t &= \tilde{x}_t - w\left(\sqrt{1-\bar{\alpha}_t} - \sqrt{\alpha_t - \bar{\alpha}_t}\right)(\epsilon^c(\tilde{x}_t) - \epsilon^u(\tilde{x}_t)), \\
x_{t-1} &= \frac{\sqrt{\bar{\alpha}_{t-1}}}{\sqrt{\bar{\alpha}_t}} \hat{x}_t + \left(\sqrt{1-\bar{\alpha}_{t-1}} - \frac{\sqrt{1-\bar{\alpha}_t}}{\sqrt{\alpha_t}}\right)\epsilon^u(\tilde{x}_t).
\end{align}

Although computationally efficient, the stochastic approximation $n_{t\to t+1}$ introduces semantic distortion compared to deterministic counterparts.

We demonstrate that Resampling can be viewed as an approximation of Z-Sampling when the step size is sufficiently small and $\gamma$ is sufficiently large. First, we express the DDIM forward process $f^{\gamma}_{t+1\to t}$ explicitly:
\begin{equation}
f^{\gamma}_{t+1\to t}(x_{t+1}) = \frac{1}{\sqrt{\alpha_{t+1}}}x_{t+1} + \left(\sqrt{1-\bar{\alpha}_t} - \frac{\sqrt{1-\bar{\alpha}_t}}{\sqrt{\alpha_{t+1}}}\right)\epsilon^{\gamma}(x_{t+1}).
\end{equation}
For a sufficiently small step size, we approximate $\epsilon(x_{t+1}) \approx \epsilon(x_t)$ and substitute $x_{t+1} = \sqrt{\alpha_{t+1}}x_t + \sqrt{1-\alpha_{t+1}}\epsilon$ to obtain the fixed-point update form:

\begin{equation}
\begin{aligned}
\hat{x}_t &= \frac{1}{\sqrt{\alpha_{t+1}}}(\sqrt{\alpha_{t+1}}x_t + \sqrt{1-\alpha_{t+1}}\epsilon) + \left(\sqrt{1-\bar{\alpha}_t} - \frac{\sqrt{1-\bar{\alpha}_t}}{\sqrt{\alpha_{t+1}}}\right)\epsilon^{\gamma}(x_{t+1}) \\
&= x_t + m_t\epsilon + n_t[\epsilon^u(x_t) + \gamma(\epsilon^c(x_t) - \epsilon^u(x_t))],
\end{aligned}
\end{equation}
where $m_t=\sqrt{1-\alpha_{t+1}} / \sqrt{\alpha_{t+1}}$ and $n_t=\sqrt{1-\bar{\alpha}_t} - \sqrt{1-\bar{\alpha}_t}/\sqrt{\alpha_{t+1}}$ are time-dependent constants. Compared to the fixed-point objective $\epsilon^c(x_t) - \epsilon^u(x_t)=0$ of other operators, the objective for Resampling's fixed-point operator becomes:
\begin{equation}
\epsilon^c(x_t) - \epsilon^u(x_t) = \frac{1}{\gamma}[m_t \epsilon - n_t\epsilon^u(x_t)].
\end{equation}
Theoretically, the objective of Resampling aligns with other fixed-point operators when $\gamma$ is sufficiently large. In practice, however, $\gamma$ is not set to an extremely large value to maintain sample quality, which causes Resampling to underperform compared to other operators. Nevertheless, \tabref{tab:main_results} shows that Resampling still benefits from increased iteration counts. This empirically confirms that Resampling's fixed-point operator is a functional, albeit suboptimal, choice.

This systematic analysis demonstrates how various guidance methods can be unified under our fixed point framework, as summarized in Table~\ref{tab:unified_framework} of the main text.

\subsection{Comparison of Design Choices}
\label{app:operator_comparison}

This section provides a comprehensive analysis of the design choices within our unified fixed point iteration framework, examining both iteration strength schedulers and fixed point operators.

\paragraph{Iteration strength schedulers.} The schedulers for CFG and CFG++ are given by $\xi_t = \sqrt{1-\bar{\alpha}_t} - \sqrt{\alpha_t - \bar{\alpha}_t}$ and $\tilde{\xi}_t = \sqrt{1-\bar{\alpha}_t}$, respectively. Using the $\alpha_t$ setting in DDIM as an example, Figure~\ref{fig:scheduler_comparison} shows the $\xi_t$ and $\tilde{\xi}_t$ at different timesteps.

\begin{wrapfigure}[16]{r}{0.4\textwidth}
\centering
\includegraphics[width=0.38\textwidth]{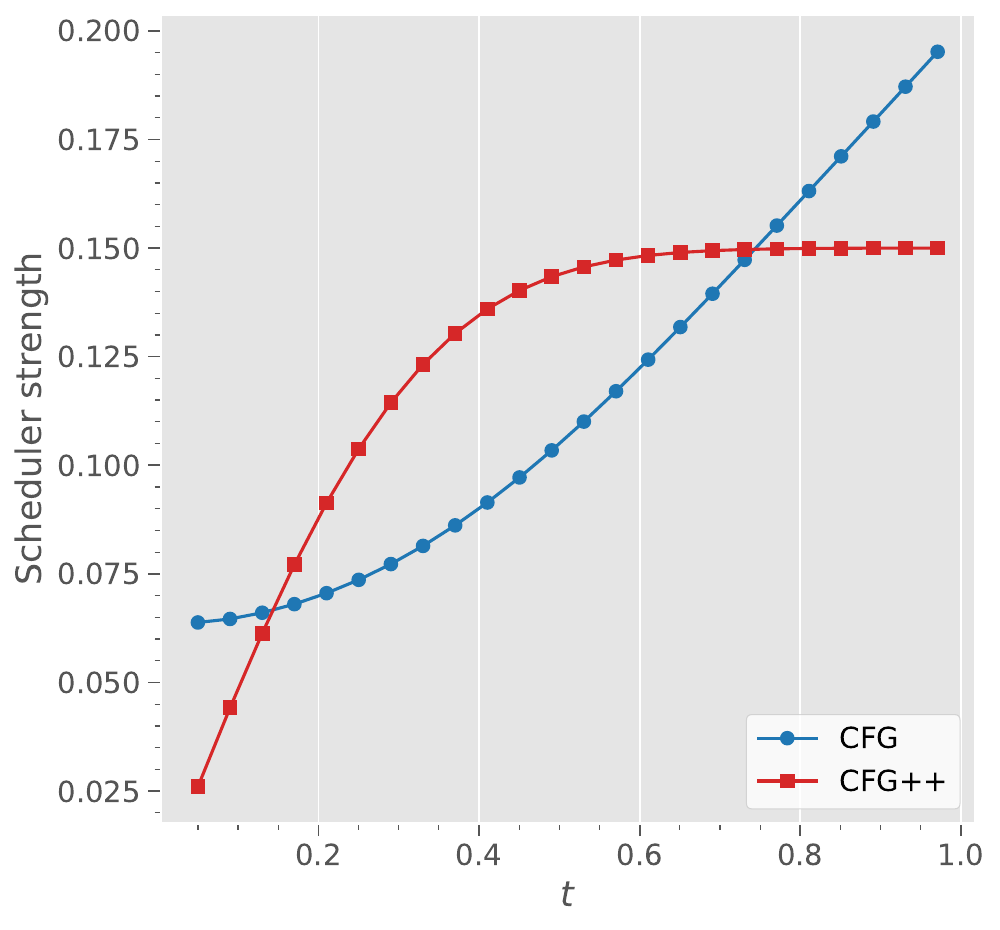}
\caption{Comparison of iteration strength schedulers for CFG ($\xi_t$) and CFG++ ($\tilde{\xi}_t$) across different timesteps.}
\label{fig:scheduler_comparison}
\end{wrapfigure}

It can be observed that, in the early stages where guidance has a greater impact on generation ($t \in [0.67, 1]$), the guidance strength provided by $\xi_t$ decays rapidly, which limits generation quality. In contrast, $\tilde{\xi}_t$ provides more stable iteration strength during these critical early stages, explaining the improved stability of CFG++ over CFG. The mathematical derivation of these schedulers reveals that $\xi_t$ contains an additional term $\sqrt{\alpha_t - \bar{\alpha}_t}$ that causes the rapid decay, while $\tilde{\xi}_t$ maintains a more gradual decrease throughout the diffusion process.

\paragraph{Fixed point operators.} In FSG, a forward-backward operator with a larger interval $\Delta t$ is selected, as it exhibits a smaller contraction rate, thereby accelerating the convergence of the fixed-point algorithm.

Due to the complexity of neural networks, a direct computation of the model's contraction rate is infeasible. Instead, we employ an empirical estimation of the local contraction rate within a neighborhood of $x_t$, defined as a set of points generated by mixing the same $x_0$ with slightly different noises. The local contraction rate for an operator $F$ under the $\ell_2$ norm is measured as:
\begin{equation}
\hat{r}=\mathbb{E}_{x_0\sim p(x)} \frac{\lVert F(x_t)-F(x_t') \rVert_2^2}{\lVert x_t-x_t' \rVert_2^2},
\end{equation}
where $x_t,x_t'$ are obtained by mixing different noises with $x_0$.

\begin{table}[h]
\caption{Empirical contraction rates $\hat{r}$ for different operators across denoising timesteps.}
\label{tab:contraction_rates}
\centering
\setlength{\tabcolsep}{3pt}
\renewcommand{\arraystretch}{1.2}

\begin{tabular}{lcccc}

\toprule
Operators & $t=$0.2 & $t=$0.4 & $t=$0.6 & $t=$0.8 \\

\midrule
$\text{id}-w_t\Delta\epsilon$ (CFG, CFG++) & 1.00 & 1.00 & 1.00 & 1.00 \\
$f^{\gamma}_{t+dt\to t}\circ f^{u}_{t\to t+dt}$ (Z-sampling) & 1.04 & 0.99 & 0.97 & 0.99 \\
$f^{\gamma}_{t+dt\to t}\circ n^{u}_{t\to t+dt}$ (Resampling) & 0.89 & 0.97 & 1.03 & 1.07 \\
$f^u_{t/2\to t}\circ f^{\gamma}_{t\to t/2}$ & 1.03 & 0.95 & 0.96 & 0.98 \\
$f^u_{t/4\to t}\circ f^{\gamma}_{t\to t/4}$ & 0.61 & 0.91 & 0.88 & 0.91 \\
$f^u_{0\to t}\circ f^{\gamma}_{t\to 0}$ & 0.62 & 0.70 & 0.75 & 0.79 \\

\bottomrule

\end{tabular}

\end{table}

Notably, long-interval operators (e.g., $f^u_{0\to t}\circ f^{\gamma}_{t\to 0}$) demonstrate lower contraction rates ($\hat{r} < 1$) compared to short-interval operators such as Z-sampling or Resampling. To balance convergence speed and operator complexity, we select an intermediate interval $\Delta t$ ($dt < \Delta t < t$) for the operator $f^u_{t-\Delta t\to t}\circ f^{\gamma}_{t\to t-\Delta t}$.

The contraction rate of an operator depends on the choice of metric. While operators in CFG/CFG++ are non-contracting under the $\ell_2$ norm, they may exhibit contraction in other carefully designed metrics. Empirically, as shown in Table~\ref{tab:main_results}, these methods also benefit from additional iterations, supporting the fixed-point framework.

\section{Discussion of Golden Path}
\label{app:golden_path_discussion}

The concept of the golden path, wherein latent representations yield consistent outputs under both conditional and unconditional generation, is central to our framework. This section provides empirical evidence supporting this theoretical foundation and demonstrates its practical implications. We construct matched and unmatched samples $x_t$ (on and off the golden path, respectively) by design.

\begin{itemize}

\item \textbf{Match case:} An image $x^c$ is generated conditioned on a prompt $c$. We then obtain $x_t$ via DDIM inversion~\cite{ddim} using unconditional noise $\epsilon^u(x_t)$, i.e., $x_t = f_{0\to t}^u (x^c)$. These $x_t$ exhibit consistency between conditional and unconditional generation paths, satisfying $x^c \approx f^u_{t\to 0}(x_t) \approx f^c_{t\to 0}(x_t)$.

\item \textbf{Mismatch case:} For each matched sample, we assign a different prompt $c' \neq c$ to create a mismatch, where $x^c \approx f^u_{t\to 0}(x_t) \neq f^{c'}_{t\to 0}(x_t)$.
\end{itemize}

\begin{wraptable}{r}{0.5\textwidth}
\vspace{-10pt}
\caption{Performance from $x_t$ under matched ($c$) and mismatched ($c' \ne c$) conditions.}
\label{tab:golden_path_validation}
\centering
\setlength{\tabcolsep}{3pt}
\renewcommand{\arraystretch}{1.2}

\begin{tabular}{ccccc}

\toprule
\multirow{2}{*}{$t$} & \multicolumn{2}{c}{IR} & \multicolumn{2}{c}{HPSv2} \\
& Match & Mismatch & Match & Mismatch \\

\midrule
0.8 & 97.09 & -69.96 & 28.85 & 25.02 \\
0.6 & 89.45 & -171.16 & 28.73 & 22.84 \\
0.4 & 84.42 & -196.38 & 28.53 & 21.86 \\
0.2 & 81.43 & -201.60 & 28.47 & 21.53 \\

\bottomrule

\end{tabular}

\vspace{-10pt}
\end{wraptable}

Using these $x_t$ as starting points, we evaluate generation performance under matched and mismatched prompt conditions using CFG~\cite{cfg}. As shown in \tabref{tab:golden_path_validation}, when $f^{c/u}{t\to 0}(x_t)$ is mismatched, both the alignment and quality of the generated images decrease significantly. Intuitively, when $f^u{t\to 0}(x_t)$ aligns poorly with condition $c$, a larger proportion of the components in $x_t$ become unsuitable for generating content consistent with $c$. This forces the diffusion model to make more substantial corrections within the limited $t\to 0$ interval, ultimately compromising generation quality and prompt adherence. These results empirically validate our theoretical framework and motivate the use of fixed-point iterations to achieve golden path consistency.

\section{Proofs}
\label{app:proofs}

We now present the theoretical foundations for FSG, summarized in \cref{thm:main}. We first clarify the notations and state the key assumptions. Under mild conditions, we provide theoretical guidance for allocating the total number of fixed-point iterations $N$.
Recall that we partition the diffusion process into $M$ intervals $t_i \to t_i - \Delta t_i$ for $i = 1, \ldots, M$, and perform $K_i$ fixed-point iterations on each interval using a forward-backward operator $\Fop_i := \Fop_{t_i \to t_i - \Delta t_i}$. Specifically, for each iteration $j = 1, \ldots, K_i$ for FSG:

\begin{enumerate}

    \item Starting from $x_{t_i}^{(j-1)}$, apply the conditionally-guided ODE solver $f^\gamma_{t_i \to t_i - \Delta t_i}$ to obtain $x_{t_i - \Delta t_i}^{(j-1)}$.

    \item Then, use the unconditionally-guided reverse ODE solver $f^\gamma_{t_i - \Delta t_i \to t_i}$ to map back and update $x_{t_i}^{(j)}$.
\end{enumerate}

This forward-backward procedure iteratively calibrates $x_{t_i}$ by leveraging both conditional and unconditional guidance over the interval, promoting consistency between the two trajectories. For other methods such as CFG and CFG++, the operator $\mathcal{F}_i$ can be defined as described in \cref{sec:uni} (Appendix~\ref{app:unified_framework}). Our analysis remains applicable under these definitions.

For the clarity of theoretical analysis, we consider a fixed-length interval partition, setting each interval to size $W$ so that the total number of timesteps $\Tt$ is divided into $M = \Tt / W$ intervals. We assume $M$ divides $N$, and allocate $K = N / M$ fixed-point iterations to calibrate each $x_{t_i}$, where $t_i = iW$ for $i = 1, \dots, M$.
This leads to the following procedure to produce the trajectory $\xthat$ from given $x_T$:
        \begin{equation} \label{eq:interval_based_distribution}
        \xthat =
        \begin{cases}
             \FopI{t}{t-W}^{K}\left(x_t\right), & \text{if } t \bmod \Ww = 0 \\
            x_t, & \text{otherwise}
        \end{cases}, \quad
        \xvar{t-1}{} = \ftugen{t}{t-1}{\xthat[t]},
        \quad t = 1 \dots T.
        \end{equation}

Let $\epsu{x}{t}$ and $\epsc{x}{t}$ denote the unconditional and conditional noise predictions, respectively.
The domain regarding $x$ for all the functions, including $\epsuc{x}{t}$, $\ftucgen{a}{b}{x}$ and $\FopI{a}{b}(x)$, etc., is assumed to be $\mathbb{R}^{d}$ (unless otherwise specified). And we use $\| \cdot \| $ to denote the Euclidean norm in $\mathbb{R}^{d}$.

Key assumptions are as follows:

\begin{enumerate}

    \item \textbf{Boundedness:}
    The latent variables and noise predictions are bounded, i.e., $\|\hat{x}_t\|, \|\epsilon^{c/u}(\hat{x}_t)\| \leq B$ for some constant $B$ (\cref{ass:boundedness}).

    \item \textbf{Smoothness:}
    The noise functions $\epsilon^{c/u}(\cdot)$ are Lipschitz continuous with a smoothness constant $L$ (\cref{ass:smoothness_of_noise}). Additionally, the ODE dynamics derived from these noise functions are smooth (\cref{ass:smooth_ode}).

    \item \textbf{Contraction:}
    The fixed-point operator $\Fop_i$ is a contraction with a rate bounded by $r \in (0,1)$ (\cref{ass:fixed_point}).
\end{enumerate}

Building on the assumptions, we establish a framework for optimizing the allocation of fixed-point iterations. For the procedure described in \cref{eq:interval_based_distribution}, we analyze the allocation of iterations across intervals $[(i-1)W, iW]$ for $i \in [1, \numInteval]$, where the fixed-point operator $\FopI{iW}{(i-1)W}$ is applied. The sampling ratio is defined as:
\begin{equation}
\beta := \frac{1}{M} = \frac{K}{\Nt} = \frac{W}{\Tt},
\end{equation}
where $M$ is the number of intervals, $K$ is the number of iterations per interval, $\Nt$ is the total number of iterations, and $\Tt$ is the total number of timesteps. We derive an upper bound to guide the optimal choice of $\beta$.

\begin{theorem*}[detailed version of \cref{thm:main}]
    \label{thm:main_detailed}
    Assume that \cref{ass:boundedness,ass:smooth_ode,ass:smoothness_of_noise,ass:fixed_point} hold. Let $\xthat$ denote the trajectory produced as in \cref{eq:interval_based_distribution}, with $\beta = \frac{1}{M} \in (0,1)$. There exists an upper bound function:
    \begin{equation}
    g(\beta, L, r) = \left(\Theta(1) + \Theta(L^2)\right) r^{2\beta N} + 2L^2 \beta^2,
    \end{equation}
    such that:
    \begin{equation}
    \Loss \leq B^2 T g(\beta, L, r).
    \end{equation}
    \end{theorem*}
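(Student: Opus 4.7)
My plan is to decompose the average loss $\mathcal{L}$ timestep by timestep, bound each squared noise-gap by exploiting contraction at calibration timesteps and smoothness everywhere else, and then sum the contributions. The final bound arises from two distinct error sources: the iteration residue after finitely many contractions (producing the $r^{2\beta N}$ term) and the discretization of intervals of length $W = \beta T$ (producing the $\beta^2$ term).

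First I would characterize the fixed points of each forward-backward operator $F_i = f^u_{t_i - W \to t_i} \circ f^{\gamma}_{t_i \to t_i - W}$. Since $f^u_{t-W \to t}$ is taken as the inverse of $f^u_{t \to t-W}$, a fixed point $x^{\star}_i$ of $F_i$ satisfies $f^{\gamma}_{t_i \to t_i - W}(x^{\star}_i) = f^u_{t_i \to t_i - W}(x^{\star}_i)$, i.e. the conditionally and unconditionally guided ODE trajectories starting at $x^{\star}_i$ coincide at time $t_i - W$. Expanding both flows as first-order ODEs in the step size $W$ using \cref{ass:smooth_ode} and the identity $\epsilon^{\gamma} = \epsilon^u + \gamma(\epsilon^c - \epsilon^u)$, this flow-equality translates into a pointwise bound of the form
\begin{equation*}
\|\epsilon^c(x^{\star}_i) - \epsilon^u(x^{\star}_i)\| \;\le\; C_1 L W \;=\; C_1 L \beta T
\end{equation*}
for a universal constant $C_1$ depending on the smoothness constants of the ODE. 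This residue at the fixed point is precisely what produces the $L^2 \beta^2$ term in $g$.

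Second, by the contraction hypothesis \cref{ass:fixed_point} with rate $r$, after $K = \beta N$ iterations one has $\|\hat{x}_{t_i} - x^{\star}_i\| \le r^{\beta N} \|x_{t_i}^{(0)} - x^{\star}_i\| \le 2B\, r^{\beta N}$, using boundedness (\cref{ass:boundedness}). Combining this with the Lipschitz property of $\epsilon^c - \epsilon^u$ (\cref{ass:smoothness_of_noise}) and the triangle inequality yields
\begin{equation*}
\|\epsilon^c(\hat{x}_{t_i}) - \epsilon^u(\hat{x}_{t_i})\| \;\le\; 2L\,\|\hat{x}_{t_i} - x^{\star}_i\| + \|\epsilon^c(x^{\star}_i) - \epsilon^u(x^{\star}_i)\| \;\le\; 4BL\, r^{\beta N} + C_1 L \beta T.
\end{equation*}
Squaring and using $(a+b)^2 \le 2a^2 + 2b^2$ gives a per-timestep bound of the desired form at each calibration index $t_i$.

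Third, for non-calibration timesteps $t \in (t_{i-1}, t_i)$, $\hat{x}_t$ is obtained by unconditional denoising from $\hat{x}_{t_i}$. Using the Lipschitz property of the noise networks together with smoothness of the unconditional flow map (\cref{ass:smooth_ode}), I would show that the squared noise-gap at such $t$ differs from the squared gap at $\hat{x}_{t_i}$ by at most $O(L^2 W^2)$, which is then absorbed into the $L^2 \beta^2$ term. Summing the per-timestep bounds over all $T$ indices and dividing by $T$ as in the definition of $\mathcal{L}$ yields the stated inequality, with $g(\beta, L, r) = (\Theta(1) + \Theta(L^2))\, r^{2\beta N} + 2L^2 \beta^2$ once absolute constants involving $B$ and the smoothness constants are absorbed.

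The main obstacle will be the first step: translating the fixed-point identity, which is a statement about two ODE \emph{trajectories} agreeing at the endpoint of an interval, into a clean pointwise bound on $\|\epsilon^c(x^{\star}_i) - \epsilon^u(x^{\star}_i)\|$ of order $L W$. This requires a careful expansion of both ODE flows to first order in $W$, controlling the remainder with both the Lipschitz constant of $\epsilon^{c/u}$ and the smoothness of the induced dynamics, and tracking how the prefactor $\gamma$ interacts with these estimates so that the final $\beta^2$ term does not inherit spurious factors of $\gamma$ or $T$. Once this quantitative flow-to-noise transfer is in place, Steps~2 and~3 proceed by straightforward Lipschitz/triangle-inequality arguments.
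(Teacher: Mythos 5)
Your high-level decomposition is right — bound the iteration error at calibration timesteps via contraction, bound the drift at intermediate timesteps via time-smoothness — but the proposal diverges from the paper at the one genuinely delicate point, and that divergence introduces a scaling error.

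Your Step~1 is unnecessary and, if taken at face value, gives the wrong order of magnitude. Under \cref{ass:fixed_point} the fixed point satisfies $\ftcgen{a}{b}{x^{\star}} = \ftugen{a}{b}{x^{\star}}$ exactly, and \cref{ass:smooth_ode}~(noise control) gives $\|\ftcgen{a}{b}{x}-\ftugen{a}{b}{x}\|\ge c|\lambda_a|\,\|\epsc{x}{a}-\epsu{x}{a}\|\,|b-a|$; together these force $\epsc{x^{\star}}{a}=\epsu{x^{\star}}{a}$ with no residue at all. The $C_1 L W$ residue you posit does not exist in the paper's framework, and no Taylor expansion of the two flows is required. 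Worse, if you carry the residue through, squaring and summing over $T$ timesteps gives a contribution of order $L^2\beta^2 T^3$, whereas the theorem's bound $B^2Tg(\beta,L,r)$ permits only $B^2 T\cdot 2L^2\beta^2$. The $L^2\beta^2$ term in $g$ comes entirely from the drift at non-calibration timesteps, not from any residue at the fixed point, so the "main obstacle" you flag at the end is actually a red herring.

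Your Step~2 is a valid shortcut: contraction gives $\|\hat{x}_{t_i}-x^{\star}_i\|\le 2Br^{K}$, and then Lipschitzness of the noise predictors (\cref{ass:smoothness_of_noise}) plus the vanishing fixed-point gap gives $\|\epsc{\hat{x}_{t_i}}{t_i}-\epsu{\hat{x}_{t_i}}{t_i}\|\le 4BLr^{K}$. The paper instead takes a longer route through \cref{lem:fixed_point_error} (bound the trajectory gap at $F^{(K)}(x)$ via contraction and flow-smoothness, then apply noise control a second time to convert back to a noise gap), which is what produces the extra $\Theta(1)$ next to $\Theta(L^2)$ in the prefactor. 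Your version yields the same $r^{2\beta N}$ exponent with a slightly tighter constant, so this step survives as an alternative.

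Your Step~3 misses the $1/T$ normalization built into the time-smoothness part of \cref{ass:smoothness_of_noise}: the gap between $\epsuc{\xvar{t_1}{u}}{t_1}$ and $\epsuc{\xvar{t_2}{u}}{t_2}$ is bounded by $LB|t_1-t_2|/T$, not by $L|t_1-t_2|$. Your stated per-timestep drift of $O(L^2W^2)$ should be $O(L^2B^2W^2/T^2)$; only with that factor does summing $(W-i)^2$ over the interval and over $M=T/W$ intervals produce the claimed $2L^2\beta^2$ term, rather than a spurious factor of $T^2$.
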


    The proof is deferred to the end of this section.

     Here, $O(f(L))$ denotes an upper bound up to a constant, i.e., $\leq C f(L)$ for some $C>0$; $\Theta(f(L))$ denotes both upper and lower bounds up to constants, i.e., $c f(L) \leq \cdot \leq C f(L)$ for some $c, C > 0$.
     The \cref{ass:boundedness} provides a trivial bound $\mathcal{L} \leq 4B^2 T$, while large $L$ may lead to $g(\beta, L, r) > 4$, so our bound is more meaningful for smooth noise function with smaller $L$. For $L = O(1)$, the bound simplifies to:
    \begin{equation}
    g(\beta, L, r) = \Theta(r^{2\beta N}) + 2L^2 \beta^2.
    \end{equation}

    \paragraph{Optimal $\beta$ for minimizing $g(\beta, L, r)$:}
    For fixed $N$, $L$, and $r$, the optimal $\beta^*$ is obtained by solving:
    \begin{equation}
    \frac{d}{d\beta} g(\beta, L, r) = 4 L^2 \beta - \Theta\left(2N \ln(1/r) (1/r)^{-2\beta N}\right) = 0,
    \end{equation}
    $$
    L^2 \beta = \Theta (N \ln(1/r) (1/r)^{-2\beta N}).
    $$
    Here, $L^2 \beta$ increases linearly with $\beta$, while the term $N \ln(1/r) (1/r)^{-2\beta N}$ decays exponentially, guaranteeing a unique minimizer $\beta^*$. Importantly, as the noise smoothness $L$ decreases, the optimal $\beta^*$ increases, suggesting that smoother noise allows for fewer, longer intervals (i.e., larger $W$ and $K$). Conversely, as the total iteration budget $N$ grows, $\beta^*$ approaches zero, recovering the short-interval setting ($M = T$) as a limiting case.

Next, we formalize the boundedness assumptions for images and noise, which are standard in diffusion model analysis:

\begin{ass}[Boundedness] \label{ass:boundedness}
There exists $B > 0$ such that for all $t$ and all $x \in \mathcal{M}_t$,
\begin{equation}
\|x\| \leq B, \quad \|\epsu{x}{t}\| \leq B.
\end{equation}
\end{ass}

For images with entries in $[0,1]$ and size $d_1 \times d_2$, we may take $d = d_1d_2$ and $B = \sqrt{d_1 d_2}$.

We next formalize the smoothness properties of the noise function, introducing a constant $L = O(1)$ that is independent of the data dimension $d$ and the number of diffusion steps $T$.

\begin{ass}[Smoothness of Noise] \label{ass:smoothness_of_noise}
    There exists a smoothness constant $\Lt \in (0,C)$, such that the following properties hold for any initial point $x$ and time interval $[a,b]$:

    \begin{enumerate}

        \item \textbf{Smoothness at a fixed time:}
            For any $t \in [a,b]$ and any pair of points $x_1, x_2$, the noise function satisfies:
            \begin{equation}
            \|\epsuc{x_1}{t} - \epsuc{x_2}{t}\| \leq \Lt \|x_1 - x_2\|.
            \end{equation}

            \item \textbf{Smoothness over varying time:}
            Let $\xvar{t}{u/c} = \ftucgen{a}{b}{x}$ for $t \in [a,b]$. Then, for any $t_1, t_2 \in [a,b]$, the noise function satisfies:
            \begin{align*}
            \| \epsuc{\xvar{t_1}{u}}{t_1} - \epsuc{\xvar{t_2}{u}}{t_2} \| &\leq \Lt B \frac{|t_1 - t_2|}{T}, \\
            \| \epsuc{\xvar{t_1}{c}}{t_1} - \epsuc{\xvar{t_2}{c}}{t_2} \| &\leq \Lt B \frac{|t_1 - t_2|}{T}.
            \end{align*}

    \end{enumerate}

\end{ass}

Building on the smoothness of noise (\cref{ass:smoothness_of_noise}), we analyze the properties of $\ftucgen{a}{b}{x}$. Recall that $\ftucgen{a}{b}{x}$ represents an ODE evolving from time $a$ to $b$, initialized at point $x$. The ODE is defined as:
\begin{equation}
\frac{d}{dt}\left(\frac{\xvar{t}{u/c}}{\sqrt{\bar{\alpha}_t}}\right) = \frac{d}{dt}\left(\sqrt{\frac{1-\bar{\alpha}_t}{\bar{\alpha}_t}}\right) \varepsilon_\theta^{u/c}\left(\xvar{t}{u/c}, t\right),
\end{equation}
where $\theta$ denotes the parameters of the neural network, which we omit for simplicity in subsequent discussions. This ODE yields the following explicit form for the time derivative of $\xvar{t}{u/c}$:

\begin{equation} \label{eq:time_derivative}
\frac{d \xvar{t}{u/c}}{dt} = \mu_t \xvar{t}{u/c} + \lambda_t \epsuc{\xvar{t}{u/c}}{t},
\end{equation}

where
\begin{equation}
\lambda_{t_0} := \sqrt{\abart} \, \frac{d}{dt} \left(\sqrt{\frac{1-\abart}{\abart}}\right)\Big|_{t=t_0}, \quad
\mu_{t_0} := -\sqrt{\abart} \, \frac{d}{dt}\left(\frac{1}{\sqrt{\abart}}\right)\Big|_{t=t_0}.
\end{equation}

\begin{ass} \label{ass:smooth_ode}
    Let $L$ be the smoothness constant in \cref{ass:smoothness_of_noise}. There exist constants $C > 1$ and $c \in (0,1)$ such that:

    \begin{enumerate}

        \item \textbf{Smooth Dependence on Initial Error:}
        \begin{equation}
        \| \ftucgen{a}{b}{x} - \ftucgen{a}{b}{y} \| \leq C \ODEstinit{a}{b}{L} \|x-y\|,
        \end{equation}
        where
        \begin{equation}
        \ODEstinit{a}{b}{L} = (|\lambda_{a}| L + |\mu_a|) |b - a|  + 1.
        \end{equation}

        \item \textbf{Noise Control:}
        \begin{equation}
        \|\ftcgen{a}{b}{x} - \ftugen{a}{b}{x} \| \geq c \lambda_{a} \| \epsc{x}{a} - \epsu{x}{a} \| |b - a|.
        \end{equation}
    \end{enumerate}

\end{ass}

\textbf{Motivation:}
This assumption is motivated by a first-order Taylor expansion of $\ftucgen{a}{b}{x/y}$.

\begin{itemize}

    \item For smoothness on intial error:

    \begin{align*}
        \ftucgen{a}{b}{x} - \ftucgen{a}{b}{y}
        &\approx \left. \frac{d}{dt} \left( \ftucgen{a}{t}{x} - \ftucgen{a}{t}{y} \right) \right|_{t=a} (b-a) + (x-y) \\
        &= \left(
           \mu_a (x-y)
        +
       \lambda_a \left( \epsuc{x}{a} - \epsuc{y}{a} \right) \right) (b-a)  + (x-y), \\
        \|\epsuc{x}{a} - \epsuc{y}{a}\| &\leq L \|x - y\| \mbox{ from \cref{ass:smoothness_of_noise}}, \\
        \|\ftucgen{a}{b}{x} - \ftucgen{a}{b}{y}\| &\leq C \ODEstinit{a}{b}{L} \|x-y\|,
    \end{align*}

    where $\ODEstinit{a}{b}{L} := (|\lambda_{a}| L + |\mu_a|) |b - a|  + 1$ captures the effect of the smoothness of noise, the length of time interval $[a,b]$ the and dynamics of $\abart$ in it. $C>1$ is a constant that compensates for the error in the Taylor expansion.

    \item For noise control:
    \begin{align*}
        \ftugen{a}{b}{x} - \ftcgen{a}{b}{x}
        &\approx \left. \frac{d}{dt} \left( \ftugen{a}{t}{x} - \ftcgen{a}{t}{x} \right) \right|_{t=a} (b-a) \\
        &= \lambda_a \left( \epsc{x}{a} - \epsu{x}{a} \right) (b-a), \\
        \|\ftugen{a}{b}{x} - \ftcgen{a}{b}{y}\|
        &\geq c |\lambda_{a}| \| \epsc{x}{a} - \epsu{x}{a} \| |b-a|,
    \end{align*}

    where  $c \in (0,1)$ compensates for the error in the Taylor expansion.
\end{itemize}


\begin{ass}[Contraction and Fixed Point Existence] \label{ass:fixed_point}
    Let $r \in (0,1)$. For any time interval $[a, b]$, the fixed point operator $\FopI{a}{b}$ satisfies the following:

    \begin{enumerate}

        \item \textbf{Fixed point equivalence:}
        There exists a fixed point $x^* $ of $\FopI{a}{b}$ such that:
        \begin{equation}
        \FopI{a}{b}(x^*) = x^* \quad \Leftrightarrow \quad \ftcgen{a}{b}{x^*} = \ftugen{a}{b}{x^*}.
        \end{equation}

        \item \textbf{Contraction:}
        For all $x, y$:
        \begin{equation}
        \| \FopI{a}{b}(x) - \FopI{a}{b}(y) \| \leq r \|x - y\|.
        \end{equation}
    \end{enumerate}

\end{ass}

As an example, the main text considers the fixed point operator $\FopI{a}{b}(x) = f_{a \to b}^{u_{(-1)}} \circ f_{a \to b}^c(x)$. For brevity, we write $\Fop(x)$ when the interval $[a, b]$ is clear from context.

\begin{lem} \label{lem:fixed_point_error}
Assume that \cref{ass:boundedness,ass:smooth_ode,ass:smoothness_of_noise,ass:fixed_point} hold. For any given $x$, the following inequality holds:
\begin{equation}
\|\ftcgen{a}{b}{\FopK{k}{x}} - \ftugen{a}{b}{\FopK{k}{x}}\| \leq 2C\ODEstinit{a}{b}{L} r^k \|x - x^*\| \leq 4C \ODEstinit{a}{b}{L} r^k B.
\end{equation}
\end{lem}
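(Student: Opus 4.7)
The plan is to exploit the fixed-point equivalence in \cref{ass:fixed_point} to introduce $x^*$ as a reference point, turning the gap between the two ODE outputs at $\FopK{k}{x}$ into a stability statement about the ODE flows in their initial data, and then to use the contraction property of $\Fop$ to promote an iterate-space bound into the claimed rate $r^k$.

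First I would use the equivalence $\Fop(x^*) = x^* \Leftrightarrow \ftcgen{a}{b}{x^*} = \ftugen{a}{b}{x^*}$ to add and subtract these equal quantities:
\begin{align*}
\ftcgen{a}{b}{\FopK{k}{x}} - \ftugen{a}{b}{\FopK{k}{x}}
&= \bigl[\ftcgen{a}{b}{\FopK{k}{x}} - \ftcgen{a}{b}{x^*}\bigr] \\
&\quad - \bigl[\ftugen{a}{b}{\FopK{k}{x}} - \ftugen{a}{b}{x^*}\bigr].
\end{align*}
Applying the triangle inequality and the smooth-dependence-on-initial-data estimate from \cref{ass:smooth_ode} to each bracket yields
$$\bigl\|\ftcgen{a}{b}{\FopK{k}{x}} - \ftugen{a}{b}{\FopK{k}{x}}\bigr\| \le 2 C\,\ODEstinit{a}{b}{L}\,\bigl\|\FopK{k}{x} - x^*\bigr\|.$$

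Next, since $x^* = \Fop(x^*) = \FopK{k}{x^*}$, iterating the contraction in \cref{ass:fixed_point} $k$ times gives
$$\bigl\|\FopK{k}{x} - x^*\bigr\| = \bigl\|\FopK{k}{x} - \FopK{k}{x^*}\bigr\| \le r^{k}\,\|x - x^*\|,$$
which produces the first inequality in the statement. For the second, the boundedness in \cref{ass:boundedness} applied to $x, x^* \in \mathcal{M}_\cdot$ gives $\|x - x^*\| \le \|x\| + \|x^*\| \le 2B$, and substituting in the previous display delivers the $4 C\,\ODEstinit{a}{b}{L}\,r^k B$ bound.

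There is no real obstacle here: once the decomposition around $x^*$ is set up, the proof is a direct concatenation of three ingredients already in hand (fixed-point equivalence, ODE initial-data Lipschitz bound, geometric contraction). The only subtle point is remembering to pivot through $x^*$ first; without using that $x^*$ is a fixed point of $\Fop$, one cannot cancel the two ODE evaluations at $x^*$ and the problem does not reduce to a pure stability-in-initial-data estimate.
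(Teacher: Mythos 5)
Your proposal is correct and follows essentially the same route as the paper's proof: pivot through the fixed point $x^*$ (noting $\ftcgen{a}{b}{x^*}=\ftugen{a}{b}{x^*}$), apply the smooth-dependence-on-initial-data bound from \cref{ass:smooth_ode} twice via the triangle inequality, then use contraction and boundedness. The only cosmetic difference is that the paper writes the subtracted reference terms as $\ftucgen{a}{b}{\FopK{k}{x^*}}$ rather than $\ftucgen{a}{b}{x^*}$, which are identical since $x^*$ is a fixed point.
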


\begin{proof}
We start by expanding the difference:
\begin{align*}
&\|\ftcgen{a}{b}{\FopK{k}{x}} - \ftugen{a}{b}{\FopK{k}{x}}\| \\
&= \|\ftcgen{a}{b}{\FopK{k}{x}} - \ftugen{a}{b}{\FopK{k}{x}} - \big(\ftcgen{a}{b}{\FopK{k}{x^*}} - \ftugen{a}{b}{\FopK{k}{x^*}}\big)\| \\
&\leq \|\ftcgen{a}{b}{\FopK{k}{x}} - \ftcgen{a}{b}{\FopK{k}{x^*}}\| + \|\ftugen{a}{b}{\FopK{k}{x}} - \ftugen{a}{b}{\FopK{k}{x^*}}\|.
\end{align*}
Using the smoothness property of $\ftcgen{a}{b}{\cdot}$ and $\ftugen{a}{b}{\cdot}$ from \cref{ass:smooth_ode}, we have:
\begin{equation}
\|\ftcgen{a}{b}{\FopK{k}{x}} - \ftcgen{a}{b}{\FopK{k}{x^*}}\| \leq C \ODEstinit{a}{b}{L} \|\FopK{k}{x} - \FopK{k}{x^*}\|,
\end{equation}
\begin{equation}
\|\ftugen{a}{b}{\FopK{k}{x}} - \ftugen{a}{b}{\FopK{k}{x^*}}\| \leq C\ODEstinit{a}{b}{L} \|\FopK{k}{x} - \FopK{k}{x^*}\|.
\end{equation}
Combining these, we get:
\begin{equation}
\|\ftcgen{a}{b}{\FopK{k}{x}} - \ftugen{a}{b}{\FopK{k}{x}}\| \leq 2C \ODEstinit{a}{b}{L} \|\FopK{k}{x} - \FopK{k}{x^*}\|.
\end{equation}
Next, applying the contraction property of $\Fop$ from \cref{ass:fixed_point}, we have:
\begin{equation}
\|\FopK{k}{x} - \FopK{k}{x^*}\| \leq r^k \|x - x^*\|.
\end{equation}
Substituting this into the inequality, we obtain:
\begin{equation}
\|\ftcgen{a}{b}{\FopK{k}{x}} - \ftugen{a}{b}{\FopK{k}{x}}\| \leq 2 \ODEstinit{a}{b}{L} r^k \|x - x^*\|.
\end{equation}
Finally, using the boundedness assumption from \cref{ass:boundedness}, $\|x - x^*\| \leq 2B$, we conclude:
\begin{equation}
\|\ftcgen{a}{b}{\FopK{k}{x}} - \ftugen{a}{b}{\FopK{k}{x}}\| \leq 4 \ODEstinit{a}{b}{L} r^k B.
\end{equation}
\end{proof}

With these assumptions and lemmas established, we are now ready to prove the main theorem.

\begin{proof}
    We analyze the error in the first interval, as the analysis for other intervals is similar. Let $F$ bes the simplified notation for $\FopI{W}{0}$. The error can be bounded as follows:
    \begin{equation}
    \begin{aligned}
    \sum_{t=1}^{W}
        &\| \epsc{\xthat}{t} - \epsu{\xthat}{t} \|^2 \\
        &= \| \epsc{\xthat[W]}{W} - \epsu{\xthat[W]}{W} \|^2 + \sum_{t=1}^{W-1} \| \epsc{\xthat}{t} - \epsu{\xthat}{t} \|^2 \\
        & \stackrel{(a)}{\leq} (1 + 3(W-1)) \| \epsc{\xthat[W]}{W} - \epsu{\xthat[W]}{W} \|^2 \\
        &\quad + 3 \sum_{t=1}^{W-1} \left( \| \epsc{\xthat}{t} - \epsc{\xthat[W]}{W} \|^2 + \| \epsu{\xthat}{t} - \epsu{\xthat[W]}{W } \|^2 \right),
    \end{aligned}
    \end{equation}
    where $(a)$ follows from $\|a + b + c\|^2 \leq 3 (\|a\|^2 + \|b\|^2 + \|c\|^2)$.

    Using the bounds in \cref{ass:smoothness_of_noise} and \cref{ass:smooth_ode}, we have:
    \begin{equation}
    \begin{aligned}
    \sum_{t=1}^{W}
        &\| \epsc{\xthat}{t} - \epsu{\xthat}{t} \|^2 \\
        &\leq (3W-2) \|\ftcgen{W}{0}{\FopK{K}{\xvar{W}{}}} - \ftugen{W}{0}{\FopK{K}{\xvar{W}{}}}\|^2 \frac{1}{c^2 \lambda_W^2 W^2} \\
        &\quad + 6 \frac{B^2}{T^2} L^2 \sum_{i=1}^{W-1} (W - i)^2 \\
        &\stackrel{\cref{lem:fixed_point_error}}{\leq} (4 \ODEstinit{W}{0}{L} r^K B)^2 \frac{C^2 (3W-2)}{c^2 \lambda_W^2 W^2} + 6 \frac{B^2}{T^2} L^2 \frac{(W-1) W (2W-1)}{6}.
    \end{aligned}
    \end{equation}

    Now, summing over all intervals:
    \begin{equation}
    \begin{aligned}
    \sum_{t=1}^{T}
        &\| \epsc{\xthat}{t} - \epsu{\xthat}{t} \|^2 \\
        &= \sum_{i=1}^{\numInteval} \sum_{t=1}^{W} \| \epsc{\xthat[iW - W+t]}{iW - W+t} - \epsu{\xthat[iW - W+t]}{iW - W+t} \|^2 \\
        &\leq \sum_{i=1}^{\numInteval} \Bigg(
            (4 \ODEstinit{iW}{(i-1)W}{L} r^K B)^2 \frac{C^2 (3W-2)}{c^2 \lambda_{iW}^2 W^2} \\
        &\quad + 6 \frac{B^2}{T^2} L^2 \frac{(W-1) W (2W-1)}{6} \Bigg) \\
        &\stackrel{MW = T}{=} B^2 \Bigg( 16 \sum_{i=1}^{T/W} \ODEstinit{iW}{(i-1)W}{L}^2 \frac{C^2 (3W-2)}{c^2 \lambda_{iW}^2 W^2} r^{2K} \\
        &\quad + 6 \frac{L^2 (W-1) (2W-1)}{6T} \Bigg).
    \end{aligned}
    \end{equation}

    Expanding $\ODEstinit{iW}{(i-1)W}{L}$, we obtain:
    \begin{equation}
    \begin{aligned}
    \sum_{t=1}^{T}
        &\| \epsc{\xthat}{t} - \epsu{\xthat}{t} \|^2 \\
        &\leq 16 B^2 \sum_{i=1}^{T/W} \left( (|\lambda_{iW}|  L + |\mu_{iW}|) W  + 1\right)^2 \frac{C^2 (3W-2)}{c^2 \lambda_{iW}^2 W^2} r^{2K}\\
        &\quad + 6B^2\frac{L^2 (W-1) (2W-1)}{6T} \\
        &\stackrel{(W-1)(2W-1)\leq 2\beta^2T^2}{\leq}  16 B^2 \sum_{i=1}^{T/W} \left( (|\lambda_{iW}|  L + |\mu_{iW}|) W + 1\right)^2 \frac{C^2 (3W-2)}{c^2 \lambda_{iW}^2 W^2} r^{2K}
        + 6B^2\frac{L^2 2 \beta^2 T^2}{6T} \\
        &\leq B^2\times 16\sum_{i=1}^{T/W} \left( (|\lambda_{iW}|  L + |\mu_{iW}|) W + 1\right)^2 \frac{C^2 (3W-2)}{c^2 \lambda_{iW}^2 W^2} r^{2K} + B^2 T (2L^2 \beta^2).
    \end{aligned}
    \end{equation}
    Observe that $|\lambda_{iW}|, |\mu_{iW}|, i \in [M]$ are constants, and we define
    $$
    C_1 = 16 \sum_{i=1}^{T/W} \left( (|\lambda_{iW}|  L + |\mu_{iW}|) W + 1\right)^2 \frac{C^2 (3W-2)}{c^2 \lambda_{iW}^2 W^2} / T = \Theta(L^2) + \Theta(1).
    $$

    Finally, grouping terms:
    \begin{equation}
    \begin{aligned}
    \Loss &\leq B^2 T g(\beta, L, r),
    \end{aligned}
    \end{equation}
    where
    $$
    g(\beta, L, r) = \left( \Theta(1) + \Theta( L^2 )  \right) r^{2\beta N}+ 2L^2 \beta^2.
    $$

    \end{proof}

\section{Experimental Details}
\label{app:experimental_details}

\subsection{Experimental Setup}
\label{app:exp_setup}
In this section, we demonstrate the specific setup of the experiment, including the dataset, metrics, and hyperparameter settings for different methods.

\paragraph{Datasets.} The evaluation leverages four datasets to assess text-to-image models.

\begin{enumerate}

    \item \textbf{Pick-a-Pic}~\cite{kirstain2023pick} collects real-world user preferences from a text-to-image web app. For each prompt, users compare two generated images and select their preferred option (or mark a tie). We use the first 100 prompts from this dataset to test model performance.

    \item \textbf{DrawBench}~\cite{draw} is a comprehensive and challenging benchmark with $\sim$200 prompts spanning 11 categories like color, counting, and text rendering. These prompts test how well models handle complex or ambiguous descriptions.

    \item \textbf{GenEval}~\cite{ghosh2023geneval} evaluates whether generated images correctly follow object-focused instructions. Its 553 prompts cover object placement, quantity, color, leveraging object detection models to evaluate text-to-image models on a variety of generation tasks.

    \item \textbf{PartiPrompts}~\cite{parti} contains 1,600+ diverse prompts covering creative, technical, and abstract concepts. We randomly pick 100 prompts to assess how models balance language understanding and visual creativity.
\end{enumerate}

\paragraph{Metrics. } Four metrics are employed to quantify image quality and alignment.

\begin{enumerate}

    \item \textbf{Aesthetic Score (AES)}~\cite{aes} quantifies assigning scores (often 1–10) of visual quality by analyzing contrast, composition, color harmony, and detail richness. AES reflects human aesthetic preferences to help refine image generation.

    \item \textbf{ImageReward (IR)}~\cite{xu2023imagereward} is a reward model trained on 137,000 expert comparisons using rating/ranking methods, which can be integrated with reinforcement learning to enhance output alignment with human preferences.

    \item \textbf{Human Preference Score v2 (HPSv2)}~\cite{hps2} is fine-tuned from CLIP based on Human Preference Dataset v2 with 798,090 human preference choices. HPSv2 can predicts authentic human perceptions of beauty and style.

    \item \textbf{CLIPScore}~\cite{hessel2021clipscore} measures text-image alignment via CLIP embedding cosine similarity. This metric is essential in text-to-image synthesis, ensuring that generated visuals maintain semantic alignment.
\end{enumerate}

\begin{table}[tt]
\caption{Performance under different hyperparameter settings. Dataset: Pick-a-Pic; NFE: 100.}
\label{tab:hyperparameter_robustness}
\centering
\setlength{\tabcolsep}{3pt}
\renewcommand{\arraystretch}{1.2}

\begin{tabular}{l|cccc}

\toprule
Setting & IR$\uparrow$ & HPSv2$\uparrow$ & AES$\uparrow$ & CLIP$\uparrow$ \\

\midrule
\rowcolor{skyblue!20} 3:2:1 (Default) & 102.82 & 29.05 & 6.66 & 34.30 \\
1:1:1 & 99.13 & 28.95 & 6.67 & 34.14 \\
1:2:3 & 93.23 & 28.70 & 6.63 & 33.75 \\
Intra-stage perturbation & 102.29 ($\pm$ 1.01) & 28.72 ($\pm$ 0.22) & 6.59 ($\pm$ 0.04) & 34.33 ($\pm$ 0.17) \\
Interval perturbation & 102.17 ($\pm$ 0.79) & 29.04 ($\pm$ 0.02) & 6.67 ($\pm$ 0.01) & 34.28 ($\pm$ 0.04) \\

\bottomrule

\end{tabular}

\end{table}

\paragraph{Hyper-parameters.} For Classifier-Free Guidance (CFG), we adopted a guidance strength $w=5.5$, while CFG++ used $\lambda=0.6$. Both methods utilized 50 inference steps. In Z-Sampling, forward guidance strength was set to 5.5, and reverse guidance strength to 0. Following the setting in \cite{zsampling}, reflective sampling was applied during first 12/25 steps for NFE=50, 25/50 steps for NFE=100, and all 50 steps for NFE=150. For Resampling, configurations varied by NFE: at NFE=50, 25 inference steps with one resample per step; for NFE=100/150, 50 steps with 1 or 2 resamples per step, respectively.

For our Foresight Guidance (FSG) method, we set $\lambda=1.0$ for NFE=50/100 and $\lambda=0.7$ for NFE=150. We allocate fixed-point iterations $(t_i, \Delta t_i, K_i)$ using a stage-wise strategy that prioritizes early timesteps:

\begin{itemize}

    \item \textbf{NFE=50} (limited budget): We employ 40 inference steps and concentrate iterations on early stages. Fixed-point iterations are performed at timesteps $t \in \{1.0, 0.875, 0.625\}$ with interval size $\Delta t = 0.125$, executing $K=2, 2, 1$ iterations respectively.

    \item \textbf{NFE=100} (moderate budget): We adopt 50 inference steps with a 3:2:1 allocation ratio across early, middle, and late stages. Specifically: (i) Early stage ($t \in \{0.68, 0.72,\cdots, 1.0\}$): 24 NFEs with $\Delta t=0.06$, applying 2 iterations at $t \in \{1.0, 0.92, 0.8\}$ and 1 iteration at remaining timesteps; (ii) Middle stage ($t \in \{0.36, 0.40, \cdots, 0.64\}$): 16 NFEs with $\Delta t=0.04$; (iii) Late stage ($t \in \{0.08, 0.14,\cdots, 0.32\}$): 10 NFEs with $\Delta t=0.02$.

    \item \textbf{NFE=150} (ample budget): With sufficient computational resources, we augment the NFE=100 configuration by adding supplementary fixed-point iterations at intermediate timesteps $\{0.02, 0.06, \ldots, 0.98\}$ with interval $\Delta t=0.02$ to enhance image detail.

\end{itemize}

Despite the apparent complexity of the hyperparameters $\{(t_i, \Delta t_i, K_i)\}$, they are relatively straightforward to configure and do not require extensive fine-tuning. As demonstrated in \tabref{tab:hyperparameter_robustness}, FSG maintains stable performance under random perturbations to timestep positions ($\pm 0.02$) or interval lengths ($\times 0.7$-$1.3$), provided the overall stage-wise allocation ratio (e.g., 3:2:1) is preserved.

\begin{table}[t]
\caption{The quantitative results on PartiPrompts Dataset with the SDXL model and NFE = 50, 100, 150  ($\uparrow$ denotes higher is better, best results under the same NFE are bolded).}
\label{tab:parti_results}
\centering
\setlength{\tabcolsep}{10pt}
\renewcommand{\arraystretch}{1.2}

  \begin{tabular}{lc| cccc}

    \toprule
     {Method} & {NFE} &
    {IR$\uparrow$} & {HPSv2$\uparrow$}  & {AES$\uparrow$} & CLIP$\uparrow$ \\

    \midrule
    CFG & 50 & 99.13 & 28.72 & 6.41 & 33.01 \\
    CFG++  & 50 & 106.80 & 28.96 & \textbf{6.42} & 33.20 \\
    Z-Sampling  & 50 & 114.74 & 29.04 & 6.32 & 33.41 \\
    Resampling  & 50 &  96.73 & 28.73 & 6.17 & 33.35 \\
    \rowcolor{skyblue!20}
    FSG (ours)  & 50  & \textbf{117.65} & \textbf{29.20} & 6.30 & \textbf{33.53} \\

    \midrule
    CFG$\times 2$ & 100 & 117.84 & 29.25 & 6.33 & 33.42 \\
    CFG++$\times 2$ & 100 & 115.35 & 29.33 & 6.26 & 33.46 \\
    Z-Sampling & 100 & 118.39 & 29.22 & \textbf{6.34} & \textbf{33.47} \\
    Resampling & 100 & 112.72 & 29.23 & 6.31 & 33.20 \\
    \rowcolor{skyblue!20}
    FSG (ours) & 100 & \textbf{121.11} & \textbf{29.38} & {6.29} & 33.46 \\

    \midrule
    CFG$\times 3$  & 150 & 117.00 & 29.37 & 6.22 & 33.40 \\
    CFG++$\times 3$  & 150& 117.02 & 29.29 & 6.22 & \textbf{33.41} \\
    Z-Sampling  & 150 & 120.20 & 29.35 & \textbf{6.35} & 33.34 \\
    Resampling & 150 & 113.62 & 29.16 & 6.24 & 33.16 \\
    \rowcolor{skyblue!20}
    FSG (ours) & 150 & \textbf{123.28} & \textbf{29.40} & 6.34 & \textbf{33.41} \\

\bottomrule

\end{tabular}

\end{table}

\begin{figure}[tt]
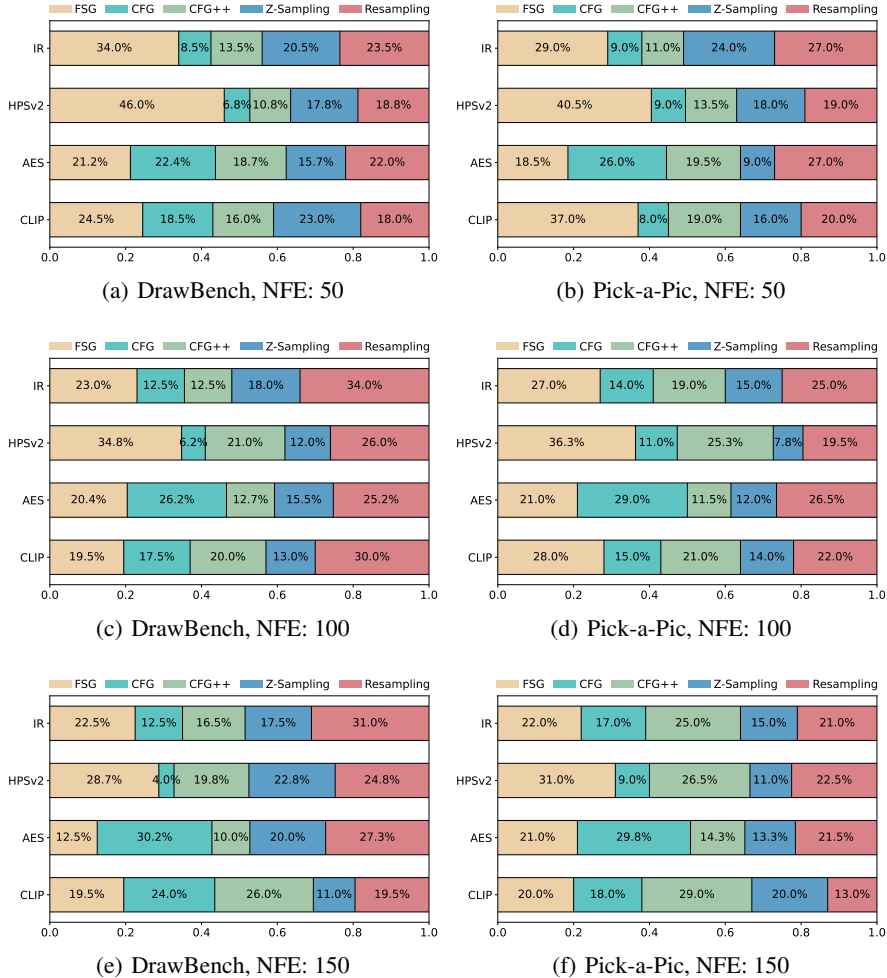

\centering
\subfigure[DrawBench, NFE: 50]{\putfig{0.42}{plots/DrawBench_50_win}}
\subfigure[Pick-a-Pic, NFE: 50]{\putfig{0.42}{plots/Pick-a-pic100_50_win}}
\subfigure[DrawBench, NFE: 100]{\putfig{0.42}{plots/DrawBench_100_win}}
\subfigure[Pick-a-Pic, NFE: 100]{\putfig{0.42}{plots/Pick-a-pic100_100_win}}
\subfigure[DrawBench, NFE: 150]{\putfig{0.42}{plots/DrawBench_150_win}}
\subfigure[Pick-a-Pic, NFE: 150]{\putfig{0.42}{plots/Pick-a-pic100_150_win}}
\caption{Proportion of samples that outperform the other four methods (Top-1 rate).}
\label{fig:top1_rate}
\end{figure}

\begin{figure}[tt]
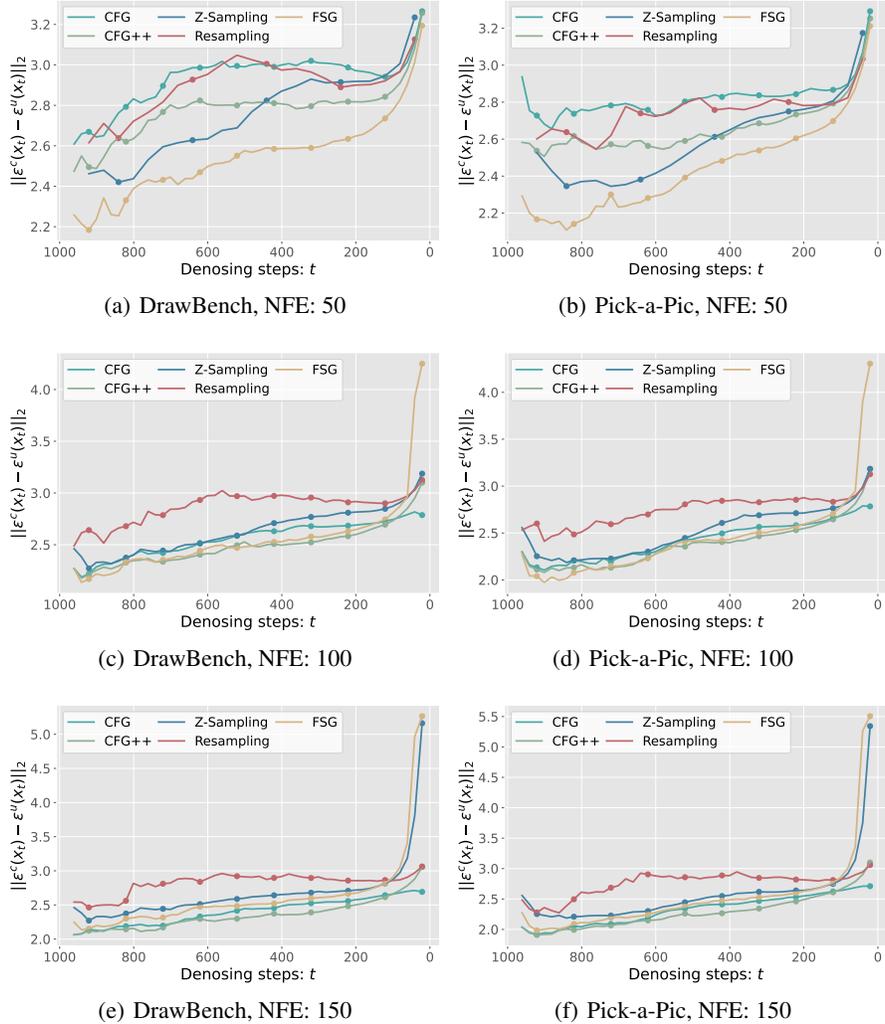

\centering
\subfigure[DrawBench, NFE: 50]{\putfig{0.42}{plots/DrawBench_50}}
\subfigure[Pick-a-Pic, NFE: 50]{\putfig{0.42}{plots/Pick-a-pic100_50}}
\subfigure[DrawBench, NFE: 100]{\putfig{0.42}{plots/DrawBench_100}}
\subfigure[Pick-a-Pic, NFE: 100]{\putfig{0.42}{plots/Pick-a-pic100_100}}
\subfigure[DrawBench, NFE: 150]{\putfig{0.42}{plots/DrawBench_150}}
\subfigure[Pick-a-Pic, NFE: 150]{\putfig{0.42}{plots/Pick-a-pic100_150}}
\caption{Prediction gap during denoising, indicating the effectiveness of the fixed point iteration.}
\label{fig:prediction_gap}
\end{figure}


\subsection{Experiment Results in PartiPrompts Dataset}
\label{app:parti_results}
In this section, we present comparative experiments between our method and various baselines on the PartiPrompts dataset, aiming to mitigate dataset-induced bias and further validate the effectiveness of our approach.

As shown in \tabref{tab:parti_results}, our method FSG consistently achieves state-of-the-art performance across IR and HPSv2 metrics. For the CLIP metric, FSG outperforms all other baselines at NFE = 50 and NFE = 150, with only a marginal difference (-0.01) against the best performance at NFE = 100. Notably, FSG demonstrates substantial improvements over CFG in the IR metric across all NFE levels (50, 100, and 150), with performance gains of +18.52, +3.27, and +6.28 respectively. Compared to the second-best method, we achieve consistent improvements of +2.91, +2.72, and +3.08, demonstrating superior image generation quality. The strong performance on both HPSv2 and CLIP metrics further confirms that our method maintains excellent alignment while producing high-quality outputs.

These results collectively demonstrate that our fixed-point iteration strategy enables more balanced sub-problem decoupling, leading to better convergence behavior and improved performance in both image quality and text alignment. These findings align with the results obtained on the DreamBench and Pick-a-Pic datasets, as presented in \tabref{tab:main_results} of the main text.

\subsection{Additional Results on Top-1 Rate}
\label{app:top1_rate}

In this section, we present the Top-1 rate of different methods on SDXL. As evidenced by the Top-1 rate in \figref{fig:top1_rate}, FSG exhibits a significant advantage at NFE=50, particularly in HPSv2 (46\% on DrawBench, 40.5\% on Pick-a-Pic) and IR (34\% on DrawBench, 29\% on Pick-a-Pic). Even as NFE increases, FSG retains superiority in HPSv2, suggesting that long-interval guidance effectively strengthens alignment with human preferences.

A notable observation is the divergent performance characteristics of fixed-point iteration variants. For example, at NFE=150, CFG enhances AES metrics, while Resampling improves IR of specific samples, likely attributable to its stochastic nature. These findings highlight the potential for adaptive frameworks that exploit the unique properties of fixed-point operators to address diverse objectives, presenting a promising direction for future research.

\subsection{Additional Results on Prediction Gap}
\label{app:prediction_gap}

In this section, we analyze the noise prediction gap $\lVert \epsilon^u(x_t)-\epsilon^c(x_t)\rVert_2^2$ during the denoising process of different methods on SDXL. We select the prediction gap as a key metric since it provides insight into score estimation accuracy along the generation trajectory. We hypothesize that models achieve optimal score~\cite{song2020score} (noise) estimation when $\epsilon^c(x_t) - \epsilon^u(x_t) \to 0$, and accumulated score estimation errors $\sum_t \epsilon^c(x_t) - \epsilon^u(x_t) \to 0$ along the trajectory are known to correlate with generation quality~\cite{chen2023sampling}. Since conditional and unconditional diffusion models share most parameters and differ only in cross-attention layers, larger discrepancies between $\epsilon^c(x_t)$ and $\epsilon^u(x_t)$ present greater learning challenges and potentially induce larger prediction errors. Through examination of the prediction gap along trajectories, we demonstrate that FSG may benefit from more accurate score estimation, which translates to improved generation performance.

As illustrated in \figref{fig:prediction_gap}, FSG accelerates fixed-point convergence at NFE=50 by integrating long-interval guidance and prioritizing early-stage iterations, accounting for its superior performance under low computational budgets. At higher NFE levels, however, the benefits of long-interval guidance diminish, resulting in comparable convergence speeds across fixed-point operators. Notably, Resampling exhibits slower mid-term convergence due to its stochastic nature, whereas CFG++ achieves greater stability through a smoother strength scheduler.

\subsection{Comparison of Fixed Point Iterations with Increasing Inference Steps}
\label{app:inference_steps}

\begin{table}[t]
\caption{Comparison of fixed point iterations with increasing inference steps on the SDXL model with NFE = 50, 100, 150  ($\uparrow$ denotes higher is better, best results under the same NFE are bolded).}
\label{tab:inference_steps}
\centering
\setlength{\tabcolsep}{3pt}
\renewcommand{\arraystretch}{1.2}

  \begin{tabular}{lc| cccc| cccc}

    \toprule
    \multicolumn{2}{c|}{Datasets} & \multicolumn{4}{c|}{DrawBench~\cite{draw}} & \multicolumn{4}{c}{Pick-a-Pic~\cite{kirstain2023pick}} \\
    \multicolumn{1}{c}{Method} & {NFE} &
    {IR$\uparrow$} & {HPSv2$\uparrow$}  & {AES$\uparrow$} & CLIP$\uparrow$ & {IR$\uparrow$} & {HPSv2$\uparrow$}  & {AES$\uparrow$} & CLIP$\uparrow$\\

    \midrule
    CFG & 50 & 59.02 & 28.73 & \textbf{6.07} & 32.29 & 82.14 & 28.46 & \textbf{6.73} & 33.53 \\
    \rowcolor{skyblue!20}
    FSG (ours) & 50 & \textbf{82.81} & \textbf{29.42} & 6.01 & \textbf{32.65} & \textbf{98.59} & \textbf{28.89} & 6.60 & \textbf{34.32}\\

    \midrule
    CFG$\times 2$ & 100 & 77.71 & 29.36 & 6.06 & 32.44 & 96.06 & 28.84 & 6.64 & 34.13\\
    CFG-$100$ & 100 &  67.87 & 28.85 & \textbf{6.12} & 32.36 & 85.11	& 28.55	& \textbf{6.73} & 33.51\\
    \rowcolor{skyblue!20}
    FSG (ours) & 100 & \textbf{84.12} & \textbf{29.54} & 6.02 & \textbf{32.76}  & \textbf{102.82} & \textbf{29.05} & 6.66 & \textbf{34.30}\\

    \midrule
    CFG$\times 3$ & 150  & 83.56 & \textbf{29.51} & 5.95 & 32.66 & 102.13 & \textbf{29.04} & \textbf{6.61} & \textbf{34.28} \\
    CFG-$150$ & 150 &  29.37 & 28.12 & \textbf{6.05} & 32.43 & 62.50	& 27.91	& 6.65 & 32.98\\
    \rowcolor{skyblue!20}
    FSG (ours) & 150 & \textbf{88.18} & 29.44 & 5.96 &	\textbf{32.70}
    & \textbf{104.86} & \textbf{29.04} & 6.65 & \textbf{34.28} \\

\bottomrule

\end{tabular}

\end{table}

In this section, we compare the performance difference between extending the fixed point iterations and inference steps. As shown in \tabref{tab:inference_steps}, allocating inference resources to increase fixed-point iterations yields greater performance improvements compared to increasing the number of inference steps. Gains from additional inference steps saturate rapidly and exhibit instability when the step count is not evenly divisible by $T$. In contrast, extending fixed-point iterations aligns the denoising trajectory closer to the golden path, producing outputs more consistent with human preferences. For instance, at NFE=100, CFG$\times 2$ achieves an approximate IR improvement of 10. Our proposed FSG enhances this advantage by optimizing subproblem-solving strategies.

\begin{table}[t]
\caption{Quantitative results on SD2 model. Dataset: Pick-a-pic; NFE: 50 and 100.}
\label{tab:sd2_results}
\centering
\setlength{\tabcolsep}{6pt}
\renewcommand{\arraystretch}{1.2}

  \begin{tabular}{l| cccc| cccc}

    \toprule
    \multicolumn{1}{c|}{NFE} & \multicolumn{4}{c|}{50} & \multicolumn{4}{c}{100} \\
    \multicolumn{1}{c|}{Method}  &
    {IR$\uparrow$} & {HPSv2$\uparrow$}  & {AES$\uparrow$} & CLIP$\uparrow$ & {IR$\uparrow$} & {HPSv2$\uparrow$}  & {AES$\uparrow$} & CLIP$\uparrow$ \\

    \midrule
CFG & -92.80 & 25.16 & 5.63 & 29.76 & -30.30 & 26.51 & 5.97 & 31.70\\

\midrule
CFG $\times1/2$ & -92.80 & 25.16 & 5.63 & 29.76 & -9.55 & 26.80 & 5.88 & 32.05 \\
CFG++$\times1/2$  & -71.92 & 25.49 & 5.76 & 30.23  & -2.15 & 26.97 & 5.92 & 32.34 \\
Z-sampling  & -6.22 & 26.97 & \textbf{6.06} & 32.53 & 0.80 & 27.21 & \textbf{6.06} & 32.58 \\
Resampling & -17.30 & 26.54 & 5.83 & 32.01 & -10.29 & 26.87 & 5.91 & 31.94 \\
\rowcolor{skyblue!20}
FSG (ours)  & \textbf{3.30} & \textbf{27.21} & 5.93 & \textbf{32.59} & \textbf{12.94} & \textbf{27.37} & 5.97 & \textbf{32.77} \\

\bottomrule

\end{tabular}

\end{table}

\begin{table}[t]
\caption{Quantitative results on Hunyuan-DiT model. Dataset: Pick-a-pic; NFE: 50 and 100.}
\label{tab:hunyuan_results}
\centering
\setlength{\tabcolsep}{6pt}
\renewcommand{\arraystretch}{1.2}

  \begin{tabular}{l| cccc| cccc}

    \toprule
    \multicolumn{1}{c|}{NFE} & \multicolumn{4}{c|}{50} & \multicolumn{4}{c}{100} \\
    \multicolumn{1}{c|}{Method}  &
    {IR$\uparrow$} & {HPSv2$\uparrow$}  & {AES$\uparrow$} & CLIP$\uparrow$ & {IR$\uparrow$} & {HPSv2$\uparrow$}  & {AES$\uparrow$} & CLIP$\uparrow$ \\

    \midrule
CFG & 116.82 & 29.09 & 6.59 & 33.00 & 120.50 & 29.12 & \textbf{6.74} & 33.20 \\

\midrule
CFG $\times1/2$ & 116.82 & 29.09 & 6.59 & 33.00 & 121.05 & 29.15 & 6.54 & 32.96 \\
CFG++$\times1/2$ & 115.63 & 29.03 & 6.63 & 32.97 & 119.71 & 29.19 & 6.64 & 33.14 \\
Z-sampling  & 127.82 & 29.21 & \textbf{6.69} & \textbf{33.56} & 128.45 & 29.37 & 6.71 & 33.40 \\
Resampling & 116.62 & 29.23 & 6.65 & 33.33 & 121.77 & 29.32 & 6.69 & 33.27 \\
\rowcolor{skyblue!20}
FSG (ours)   & \textbf{128.21} & \textbf{29.40} & 6.68 & \textbf{33.56} & \textbf{129.70} & \textbf{29.38} & 6.68 & \textbf{33.48} \\

\bottomrule

\end{tabular}

\end{table}

\begin{table}[t]
\caption{Quantitative results on DDPM sampler. Dataset: Pick-a-pic; NFE: 50 and 100.}
\label{tab:ddpm_results}
\centering
\setlength{\tabcolsep}{6pt}
\renewcommand{\arraystretch}{1.2}

  \begin{tabular}{l| cccc| cccc}

    \toprule
    \multicolumn{1}{c|}{NFE} & \multicolumn{4}{c|}{50} & \multicolumn{4}{c}{100} \\
    \multicolumn{1}{c|}{Method}  &
    {IR$\uparrow$} & {HPSv2$\uparrow$}  & {AES$\uparrow$} & CLIP$\uparrow$ & {IR$\uparrow$} & {HPSv2$\uparrow$}  & {AES$\uparrow$} & CLIP$\uparrow$ \\

    \midrule
CFG & 61.85 & 27.90 & \textbf{6.72} & 32.88 & 81.52 & 28.61 & 6.67 & 33.42 \\

\midrule
CFG $\times1/2$ & 61.85 & 27.90 & \textbf{6.72} & 32.88 & 83.56 & 28.56 & \textbf{6.70} & 33.61\\
CFG++$\times1/2$ & 67.19 & 28.00 & 6.68 & 33.19  & 81.56 & 28.60 & 6.69 & 33.72\\
Z-sampling  & 83.99 & 28.54 & 6.60 & 33.78 & 87.26 & 28.60 & 6.66 & 33.56\\
Resampling & 73.58 & 28.23 & 6.61 & 33.34 & 84.04 & 28.51 & 6.67 & 33.55\\
\rowcolor{skyblue!20}
FSG (ours)  & \textbf{90.95} & \textbf{28.80} & 6.67 & \textbf{33.85} &  \textbf{91.82} & \textbf{28.66} & 6.63 & \textbf{33.81} \\

\bottomrule

\end{tabular}

\end{table}

\subsection{Additional Results on Different Models and Samplers}
\label{app:models_samplers}
In this section, we present experimental results for the DDPM sampler~\cite{ddpm}, SD2~\cite{sd2}, and Hunyun-DiT~\cite{li2024hunyuan} models under NFE = 50 and NFE = 100 settings, complementing the NFE = 150 case discussed in the main text.

As shown in \tabref{tab:sd2_results} and \tabref{tab:hunyuan_results}, FSG maintains superior performance across IR, HPSv2, and CLIP metrics at NFE = 50 and NFE = 100 for both Hunyun-DiT and SD2 models, consistent with the NFE=150 results. With Hunyun-DiT as the base model, FSG improves the IR metric by +11.39 (NfE = 50) and 9.20 (NFE = 100) over the CFG baseline. For the SD2.0 base model, FSG achieves more pronounced improvements in IR, with gains of +96.10 (NFE = 50) and +43.24 (NFE = 100). These results confirm the model-agnostic characteristic of FSG, demonstrating the plug-and-play compatibility of FSG across different diffusion models and inference steps. Notably, weaker base models exhibit greater benefits from the improved convergence properties of FSG.

The DDPM sampler results under NFE = 50 and NFE = 100 are presented in \tabref{tab:ddpm_results}. Aligning with the main text's NFE=150 findings, FSG outperforms other methods in all metrics. It shows significant quality improvements at NFE = 50 (IR: +29.10; HPSv2: +0.90) and NFE = 100 (IR: +10.30; HPSv2: +0.39), verifying that FSG's fixed-point iteration strategy effectively enhances both generation quality and text alignment for stochastic samplers under varying computational budgets.

These supplementary experiments collectively demonstrate the robust performance of FSG across different base models, samplers, and computational budgets. The model/sampler-agnostic design of FSG and efficient resource allocation consistently improve generation quality and text alignment.

\section{Visualization}
\label{app:visual_results}


\subsection{Additional Visual Results}

\begin{figure}
    \centering
    \includegraphics[width=1.0\linewidth]{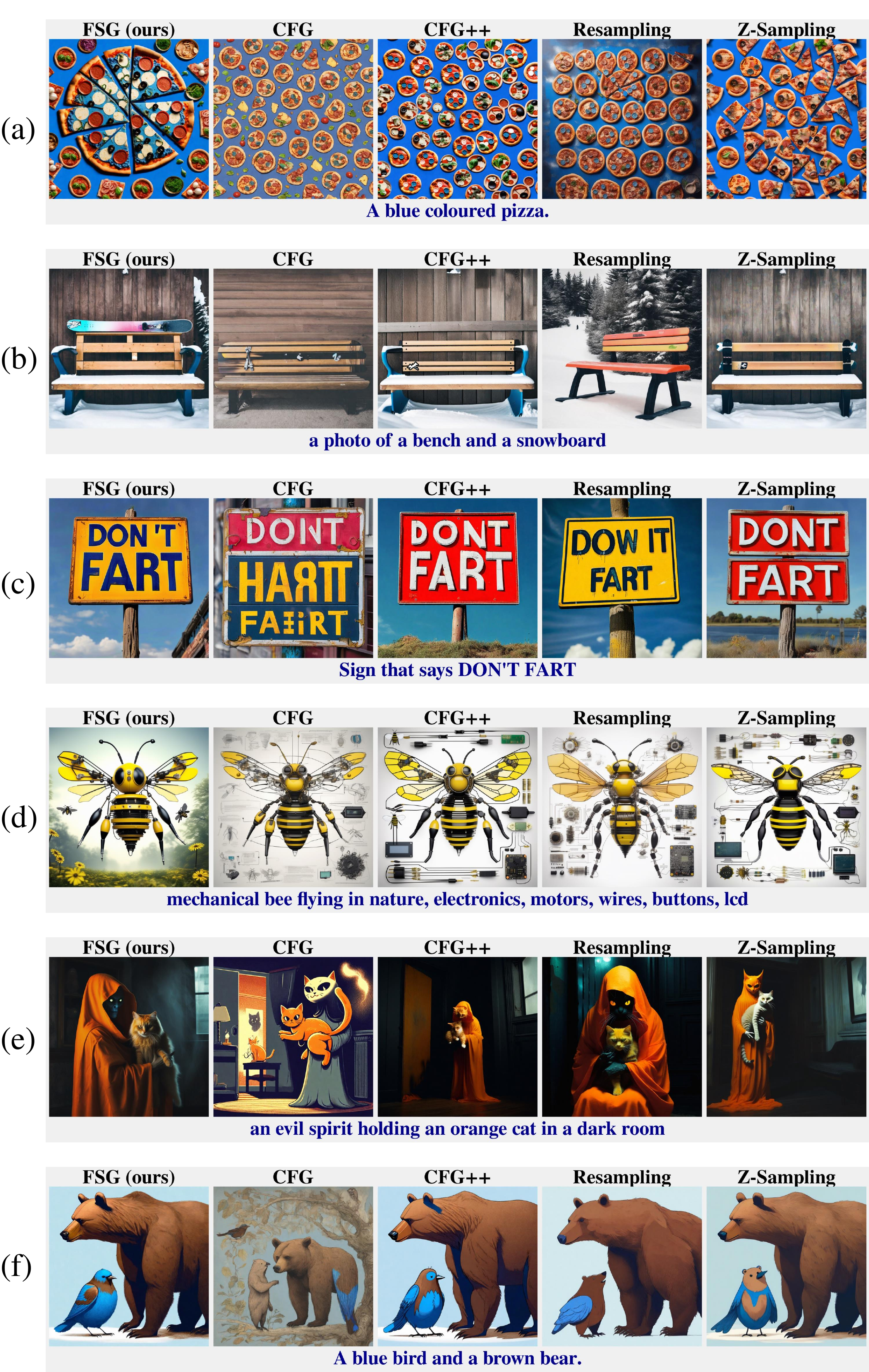}
    \caption{Additional results of qualitative analysis.}
    \label{fig:additional_visual_results}
\end{figure}

In this section, we showcase more images generated by different methods. As visualized in \figref{fig:additional_visual_results}, FSG demonstrates superior alignment with target prompts compared to baseline methods. For instance:

\begin{itemize}

    \item In case (a), FSG generates a single blue pizza, whereas other methods produce multiple red pizzas;

    \item In (b), FSG accurately renders snowboard on the bench;

    \item In (c), FSG achieves precise text rendering;

    \item In (d), FSG strictly adheres to the "in nature" requirement.
\end{itemize}

Notably, FSG mitigates semantic interference common in generative tasks. For example:

\begin{itemize}

    \item In (e), baselines erroneously depict an "evil spirit" as a cat due to the "cat" in prompt, while FSG avoids this bias;

\item In (f), baselines incorrectly blend "brown" semantics into a "blue bird" generation, whereas FSG almost preserves color fidelity.
\end{itemize}

These results highlight the robustness of FSG in aligning with complex prompts, attributed to the long-interval guidance strategy of FSG that stabilizes semantic coherence during generation.

\subsection{Failure Case Analysis}
\label{app:failure_cases}

Although FSG demonstrates superior performance across diverse benchmarks, we acknowledge its limitations through failure case analysis. \figref{fig:failure_cases} presents representative failure modes encountered by FSG. Most failures originate from inherent deficiencies in the underlying diffusion models. As shown in \figref{fig:failure_cases}, FSG inherits the base model's difficulties with complex text rendering and counter-intuitive scene compositions.

A notable failure mode specific to FSG stems from its design philosophy of prioritizing strong guidance during early diffusion stages. When the base model exhibits semantic misinterpretations in early timesteps, such as conflating a chess piece with royalty for the term "queen," or overemphasizing individual tokens like "dog", FSG's intensive early-stage calibration can inadvertently reinforce these misconceptions. This amplification effect propagates through subsequent denoising steps, resulting in misaligned generation.

We hypothesize that prompt-aware adaptive guidance, which dynamically adjusts calibration intensity based on semantic understanding, could mitigate this issue. We leave the exploration of such adaptive strategies to future work.

\begin{figure}[t]
    \centering
    \includegraphics[width=0.95\linewidth]{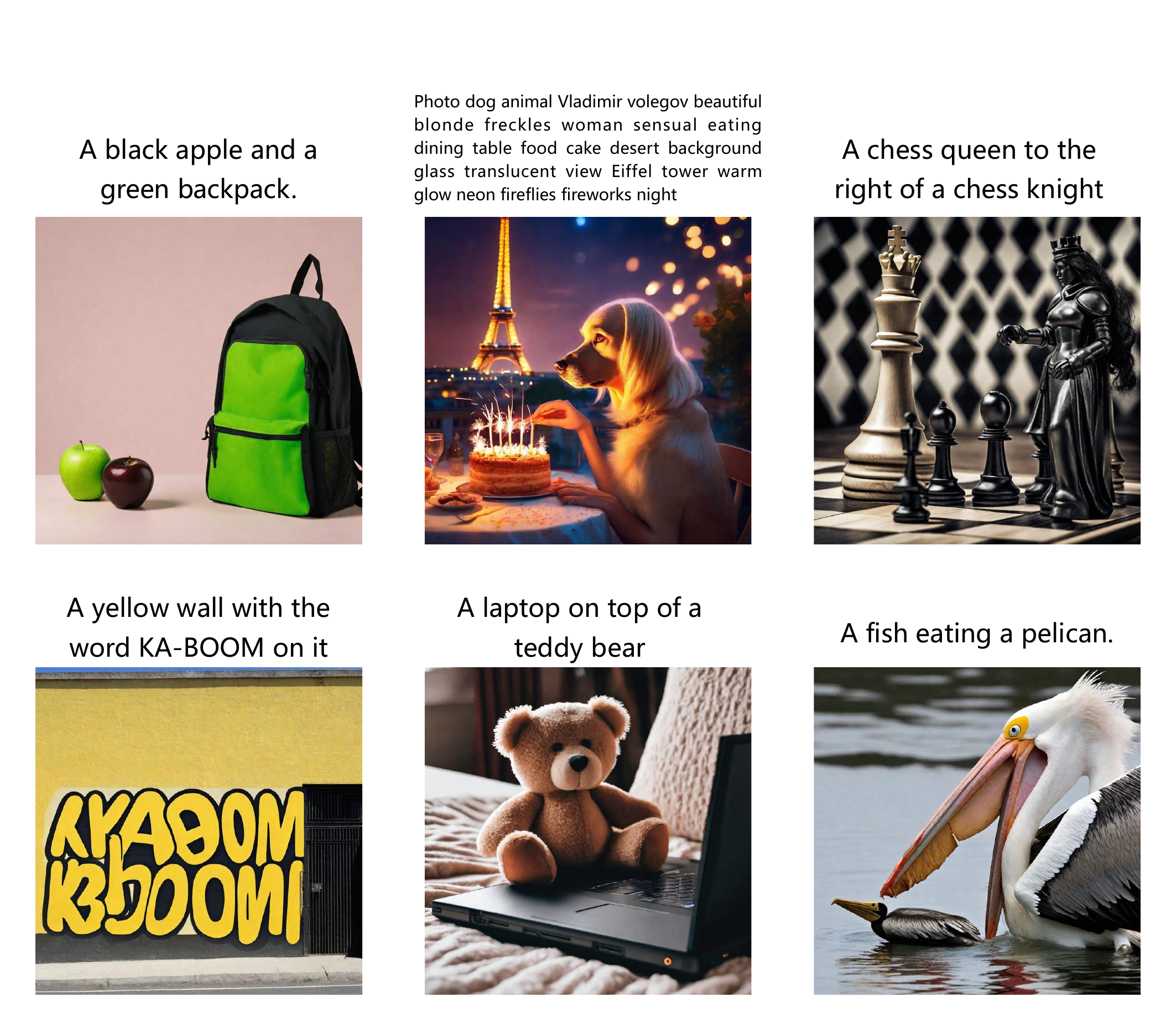}
    \caption{Representative failure cases of FSG.}
    \label{fig:failure_cases}
\end{figure}

\section{Border Impact and Limitation}
\label{app:limitations}

\paragraph{Border Impact.} The advancement of text-to-image diffusion models carries significant commercial potential for industries such as digital art, advertising, and content creation. A critical challenge in these applications lies in ensuring precise alignment between user prompts and generated outputs, which hinges on effective guidance mechanisms. Our work addresses this challenge by introducing a unified framework of fixed point iterations that reinterprets conditional guidance as a calibration process toward an idealized golden path, thereby decoupling guidance design from sampling dynamics. By demonstrating that prevalent approaches like classifier-free guidance (CFG) and its variants correspond to short-interval iterations within our framework, we reveal their inherent inefficiencies and motivate the proposed foresight guidance (FSG). FSG addresses longer-interval subproblems during early diffusion stages, optimizing computational resource allocation.

The potential impact of our work is threefold. First, it enables practitioners to flexibly integrate diverse guidance mechanisms without theoretical constraints. Second, the framework exhibits extensibility to other domains of conditional generation, such as 3D or video synthesis. Third, it remains compatible with existing techniques like noise search and preference alignment, facilitating test-time scaling (TTS) to enhance image quality by leveraging increased computational resources during inference. Collectively, this work advances adaptive design and offers novel perspectives for unlocking the potential of conditional guidance in diffusion models.

\paragraph{Limitations} In this work, we propose a unified framework grounded in fixed point iteration and introduce foresight guidance (FSG) to enhance the alignment and efficiency of conditional guidance. While empirically effective, the framework presents limitations that merit further investigation. First, the concept of the golden path remains empirically observed but not fully theoretically characterized. Second, hyperparameters such as consistency intervals and iteration schedules are determined heuristically rather than through principled optimization. Despite these challenges, our framework provides a foundational step toward systematizing guidance mechanisms, offering a flexible platform for further innovation in efficient and controllable generative modeling.


\end{document}